\newcommand*\bigcdot{\mathpalette\bigcdot@{.5}}
\newcommand*\bigcdot@[2]{\mathbin{\vcenter{\hbox{\scalebox{#2}{$\m@th#1\bullet$}}}}}
\newtheorem{theorem}{Theorem}[section]
\newtheorem{claim}[theorem]{Claim}
\newtheorem{lem}[theorem]{Lemma}
\newtheorem*{defn}{Definition}
\newtheorem*{rem}{Remark}
\newtheorem{example}[theorem]{Example}
\newcommand{\eps}{\varepsilon}
\newcommand{\cX}{\mathcal{X}}
\DeclareMathOperator*{\argmax}{argmax}
\DeclareMathOperator*{\marg}{marg}
\DeclareMathOperator{\sign}{sign}
\title{On the Perceptron's Compression}
\author{
  Shay Moran\thanks{School of Mathematics, Institute for Advanced Study, Princeton NJ.
  \texttt{shay.moran1@gmail.com}.} 
  \and
Ido Nachum\thanks{Department of Mathematics,
Technion--IIT.
  \texttt{idon@tx.technion.ac.il}.
  Support by ISF grant 1162/15.} 
  \and  
 Itai Panasoff\thanks{Department of Mathematics,
Technion--IIT.
  \texttt{itai.panasoff@gmail.com}.} 
   \and  
 Amir Yehudayoff\thanks{Department of Mathematics,
Technion--IIT.\texttt{amir.yehudayoff@gmail.com}.  Support by ISF grant 1162/15.}
} 
\begin{document}

\global\long\def\kl{\mathsf{KL}}
\global\long\def\aa{\mathcal{\alpha}}
\global\long\def\cA{\mathcal{A}}
\global\long\def\bb{\mathcal{B}}
\global\long\def\cc{\mathcal{C}}
\global\long\def\DD{\mathcal{D}}
\global\long\def\ff{\mathcal{F}}

\global\long\def\hh{\mathcal{H}}
\global\long\def\ii{\mathcal{I}}
\global\long\def\jj{\mathcal{J}}
\global\long\def\kk{\mathcal{K}}
\global\long\def\lll{\mathcal{L}}
\global\long\def\mm{\mathcal{M}}
\global\long\def\nn{\mathcal{N}}
\global\long\def\oo{\mathcal{O}}
\global\long\def\pp{\mathcal{P}}
\global\long\def\qq{\mathcal{Q}}
\global\long\def\rr{\mathcal{R}}
\global\long\def\ss{\;}
\global\long\def\uu{\mathcal{U}}
\global\long\def\vv{\mathcal{V}}
\global\long\def\ww{\mathcal{W}}
\global\long\def\xx{\mathcal{X}}
\global\long\def\yy{\mathcal{Y}}
\global\long\def\zz{\mathcal{Z}}
 \global\long\def\k{\mathcal{\kappa}}
 \global\long\def\r{\mathcal{\rho}}
\global\long\def\m{\mathcal{\mu}}
\global\long\def\n{\kappa}

\global\long\def\Wa{W_{\mathcal{A}}}
\global\long\def\CC{\mathbb{C}}
\global\long\def\Dd{\mathbb{D}}
\global\long\def\EE{\mathbb{\mathbb{E}}}
\global\long\def\FF{\mathbb{F}}
\global\long\def\KK{\mathbb{K}}
\global\long\def\LL{\mathbb{L}}
\global\long\def\NN{\mathbb{N}}
\global\long\def\PP{\mathbb{P}}
\global\long\def\QQ{\mathbb{Q}}
\global\long\def\RR{\mathbb{R}}
\global\long\def\SS{\mathbb{S}}
\global\long\def\TT{\mathbb{T}}
\global\long\def\UU{\mathbb{U}}
\global\long\def\VV{\mathbb{V}}
\global\long\def\WW{\mathbb{W}}
\global\long\def\XX{\mathbb{X}}
\global\long\def\YY{\mathbb{Y}}
\global\long\def\ZZ{\mathbb{Z}}
\global\long\def\ne{\neq}
\global\long\def\ge{\leq}
\global\long\def\e{\varepsilon}
 \global\long\def\dd{\delta}
 \global\long\def\ne{\neq}
\global\long\def\sigm{\stackrel[i=1]{m}{\sum}}
\global\long\def\sigmn{\stackrel[i=m+1]{n}{\sum}}

\newcommand{\ReLu}{\mathsf{ReLU}}
\newcommand{\E}{\mathop \mathbb{E}}

\global\long\def\wt  { w^{(t)} }
\global\long\def\wto { w^{(t-1)} }

\global\long\def\xi{x_i }
\global\long\def\xio{x_{(i-1)} }

\global\long\def\con{\subset}
 \global\long\def\sec{\cap}
 \global\long\def\un{\cup}
 \global\long\def\do{_{1}}
\global\long\def\dt{_{2}}
\global\long\def\a{\rightarrow}
\global\long\def\x{\times}
\global\long\def\s{\curvearrowright}
\global\long\def\at{\mapsto}
\global\long\def\br{\left(\right)}
\global\long\def\c{\circ}
\global\long\def\d{\bigcdot}

\newcommand{\dist}{\mathsf{dist}}

\newcommand{\mynote}[1]{{#1}}

\definecolor{DarkPurple}{rgb}{0.7,0.2,0.4}
\newcommand{\rnote}[1]{\mynote{\color{DarkPurple} Raef: {#1}}}
\newcommand{\new}[1]{{\color{red} #1}}
\newcommand{\shay}[1]{\mynote{\color{red} Shay: {#1}}}

\date{}

\maketitle

\begin{abstract}
We study and provide exposition to several phenomena that are related to the perceptron's compression.
One theme 
concerns modifications of the perceptron algorithm that yield 
better guarantees on the margin of the hyperplane it outputs.
These modifications can be useful in 
training neural networks as well, and
we demonstrate them 
with some experimental data.
{In a second theme, we deduce conclusions from
the perceptron's compression in various contexts.}
\end{abstract}

\newpage

\section{Introduction}

\label{sec:int}

The perceptron is an abstraction of a biological neuron that was introduced in the 1950's by Rosenblatt~\cite{perceptron},
and has been extensively studied in many works (see e.g.\
the survey~\cite{Mohri}).
It receives as input a list of real numbers
(various electrical signals in the biological case) and if the weighted sum of its input is greater than some threshold it outputs $1$ and otherwise $-1$ (it fires or not in the biological case). 

Formally, a perceptron computes a function of the form
$\sign(w \bigcdot x - b)$ where $w \in \RR^d$ is the weight vector, $b \in \RR$
is the threshold, $\bigcdot$ is the standard inner product,
and $\sign :\RR \to \{\pm 1\}$ is $1$
on the {non-negative} numbers.
It is only capable of representing binary functions that are induced by partitions of $\RR ^ d$ by hyperplanes.

%

\medskip

\begin{defn}
A map $Y: {\cal X} \a \{\pm 1\}$ over a finite set ${\cal X} \con \RR ^d $ is (linearly)\footnote{We focus on the linear case, when the threshold is $0$.
{A standard lifting that adds a coordinate with $1$
to every vector allows to translate the general (affine) case to the linear case.
This lifting may significantly decrease the margin;
e.g., the map $Y$ on ${\cal X} = \{999,1001\} \subset \RR$ defined by
$Y(999)=1$ and $Y(1001) = -1$ has margin $1$ in the affine sense,
but the lift to $(999,1)$ and $(1001,1)$ in $\RR^2$ yields very small margin in the linear sense.
This solution may therefore cause an unnecessary increase in running time.}
This tax can be avoided, for example, if one has prior knowledge of  $R=\max_{x \in {\cal X}} \|x \|$. 
In this case, setting the last coordinate to be $R$
does not significantly decrease the margin.
In fact, it can be avoided without any prior knowledge 
using the ideas in Algorithm~\ref{alg:Rindep} below.}
separable if there exists {$w\in\RR^d$} 
such that $\sign(w \bigcdot x)=Y(x)$ for all $x\in {\cal X}$.
When the Euclidean norm of $w$ is $\|w\|=1$,
the number $\marg (w,Y) = {  \min_{x\in {\cal X}} {Y(x) w\bigcdot x }}$ is the margin 
of $w$ with respect to $Y$.
The number $\marg(Y) = \sup_{w \in \RR^d : \|w\|=1} \marg(w,Y)$
is the margin of $Y$. We call $Y$ an $\eps$-partition if its margin is at least $\eps$.
\end{defn}

Variants of the perceptron (neurons) are the basic building blocks of general neural networks.
Typically, the sign function is replaced by some other activation function
(e.g., sigmoid or rectified linear unit $\ReLu(z) = \max \{0, z\}$).
Therefore, studying the perceptron and its variants may 
help in understanding neural networks,
their design and their training process.

\subsubsection*{Overview}
In this paper, we provide some
insights into the perceptron's behavior,
survey some of the related work,
deduce some geometric applications, and discuss their usefulness in other learning contexts.
{Below} is a summary of our results and a discussion of related work,
{partitioned to five parts numbered (i) to (v).}
Each of the results we describe highlights
a different aspect of the perceptron's compression 
{(the perceptron's output is a sum of small subset of examples)}.
For more details, definitions and references, see the relevant sections.

\textbf{(i) Variants of the perceptron (Section~\ref{sec:PerVar}).} 
The well-known perceptron algorithm 
(see Algorithm~\ref{alg:Per} below) is guaranteed to find a separating hyperplane in the linearly separable case. However, there is no guarantee on the hyperplane's margin compared
to the optimal margin $\eps^*$. 
This problem was already addressed in several works,
as we now explain (see also references within).
The authors of~\cite{links} and~\cite{stab} defined a variant 
of the perceptron that yields a margin of the form {$\Omega(\eps^*/R^2)$; 
see Algorithm~\ref{alg:AggPer} below.} 
The authors of \cite{crammer} defined the
passive-aggressive perceptron
algorithms that allow e.g.~to deal with noise, but
provided no guarantee on the margin of the output.
The authors of \cite{max_marg} 
defined a variant of the perceptron
that yields provable margin
under the assumption
that a lower bound on the optimal margin is known.
The author of \cite{approx_marg} designed the ALMA algorithm
and showed that it provides almost optimal margin 
under the assumption that the samples lie on the unit sphere.
It is worth noting that normalizing the examples
to be on the unit sphere may significantly alter the margin,
and even change the optimal separating hyperplane.  
The author of \cite{un} defined the minimal overlap algorithm which guarantees optimal margin but is not online since it knows the samples in advance.
Finally, 
the authors of~\cite{grad}
analyzed gradient descent for a single neuron
and showed convergence to the optimal separating hyperplane
under certain assumptions
(appropriate activation and loss functions).

We provide two new ideas that improve the learning process.
One that adaptively changes the ``scale''
of the problem and by doing so 
{improves the guarantee on the margin of the output}
(Algorithm~\ref{alg:Rindep}),
and one that yields almost optimal margin
(Algorithms~\ref{alg:PerInf}).

\textbf{(ii) Applications for neural networks (Section~\ref{sec:AppNN}).}  
Our variants of the perceptron algorithm are simple 
to implement, and can therefore be easily applied 
in the training process of general neural networks. 
We validate their benefits 
by training a basic neural network 
on the MNIST dataset.

\textbf{(iii) Convex separation (Section~\ref{sec:convSep}).} 
We use the perceptron's compression to prove a sparse separation lemma for convex bodies. This perspective also suggests a different proof 
of Novikoff's theorem on the perceptron's convergence~\cite{Nov}.
In addition, we interpret this sparse separation lemma in the language of game theory
as yielding sparse strategies in {a related zero-sum} game.

\textbf{(iv) Generalization bounds (Section~\ref{sec:GenBound}).} 
{An important aspect of a learning algorithm is its generalization capabilities;
{namely, its error on {new examples that are independent of the training set}
(see the textbook~\cite{SS-text} for background and definitions).}
We follow the theme of~\cite{graepe05},
and {observe} that even though the (original) perceptron algorithm does not yield an optimal hyperplane, it still generalizes.

\textbf{(v) Robust concepts (Section~\ref{sec:RobCon}).}  
The robust concepts theme presented by Arriaga and Vempala~\cite{Arr}
suggests focusing on well-separated data.
We {notice} that the perceptron
fits well into this framework;
specifically, that its compression yields efficient dimension reductions.
Similar dimension reductions were used in several previous works
(e.g.~\cite{Arr,bendavid04, dim1,dim2,dim3,dim4,dim5}). 

{\textbf{Summary.}
In parts (i)-(ii) 
we provide a couple of new ideas {for improving the training process} and explain
their contribution in the context of previous work.
In part (iii) we use the perceptron's compression
as a tool for proving geometric theorems.
We are not aware of previous works that studied
this connection.
Parts (iv)-(v) are mostly about presenting
ideas from previous works in the context of the
perceptron's compression.
We think that parts (iv) and (v) help to 
understand the picture more fully.
}


%
%
%
%
%
%

\section{Variants of the Perceptron}

\label{sec:PerVar}

{Deciding how to train a model from a list of input examples is a central
consideration in any learning process.
In the case of the perceptron algorithm
the input examples are traversed while maintaining
a hypothesis $w^{(t)}$ in a way that reduces the error on the current example:}
 
\begin{algorithm}[H]
%
%
\textbf{initialize:} $w^{(0)}=\vec{0}$ and $t=0$

  \While {$\exists i$ with $y_i w^{(t)} \bigcdot x_i \leq 0 $}{
  $w^{(t+1)}=w^{(t)}+y_i x_i$ \\
  $t = t+1$
 }
 return $w^{(t)}$
 \caption{The perceptron algorithm}
 \label{alg:Per}
\end{algorithm}

{
{Clearly,} the perceptron algorithm terminates 
whenever its input sample is linearly separable,
in which case its output represents a separating hyperplane. 
Novikoff analyzed the number of steps $T$ 
required for the perceptron to stop as a 
function of the margin of the input sample~\cite{Nov}.}

{The standard analysis of the perceptron convergence properties
uses the optimal separating hyperplane $w^*$
(later in Section~\ref{sec:convSep} we present an alternative analysis that
does not use it)}:
 \[w^*=\argmax_{w \in \RR^d : \|w\|=1  } \marg(w,S) ,\]
where we think of $S$ as the map from $\{x_1,\ldots,x_m\}$
to $\{\pm 1\}$ defined by $x_i \mapsto y_i$.\footnote{We assume that 
$S$ {is consistent with} a function (does not contain {identical points with opposite labels}).}
Novikoff's analysis consists of the following two parts.
Let $\eps^* = \marg(w^*,S)$ and~$R=\max_i \|\xi \|$.

\noindent
{\em Part I: The projection grows linearly in time.}
In each iteration, the projection of $w^{(t)}$ on $w^*$ grows 
by at least $\eps^*$,
since $y_i  x_i  \bigcdot  w^* \geq \eps^*$.  
By induction, we get $w^{(t)} \bigcdot w^* \geq \eps^* t$ for all $t \geq 0$.

\noindent
{\em Part II: The norm grows sub-linearly in time.}
In each iteration, 
$$\|\wt \|^2 = \|\wto\|^2+ 2y_i\xi\bigcdot \wto +\|\xi\|^2
\leq \|\wto\|^2 + R^2$$ 
(the term $ 2y_i\xi\bigcdot \wto$ is negative by choice).
So by induction $\|\wt \| \leq R \sqrt{t}$ for all $t$.

Combining the two parts, 
$$1 \geq \frac{\wt \bigcdot w^* }{\|\wt\| \|w^*\|} \geq \frac{\eps^*}{R}\sqrt{t},$$ 
{which implies that} the number of iterations of the algorithm is at most $(R/\eps^*)^2$.

As discussed in Section~\ref{sec:int}, Algorithm~\ref{alg:Per} has several drawbacks.
Here we describe some simple ideas that allow to improve it.
Below we describe three algorithms,
each is followed by a theorem that summarizes
its main properties.

In the following, $\cX\subset \RR ^d$ is a finite set,
$Y$ is a linear partition,
$\eps^* = \marg(Y)$ is the optimal margin,
and $R=\max_{x \in {\cal X}} \|x \|$
is the maximal norm of a point.

%
%
%
%

In the first variant that already appeared in~\cite{links,stab},
the suggestion is to replace
the condition $y_i w^{(t)} \bigcdot x_i <0 $ by
$y_i w^{(t)} \bigcdot x_i < \beta$
for some {a priori} chosen $\beta >0$.
that may change over time. 
{As we will see,} different choices of $\beta$ yield different guarantees.


\begin{algorithm}[H]
%
%
%
\textbf{initialize:} $w^{(0)}=\vec{0}$ and $t=0$

  \While {$\exists i $ with $y_i w^{(t)} \bigcdot x_i < \beta $}{
  $w^{(t+1)}=w^{(t)}+y_i x_i$ \\
  $t = t+1$
 }
 return $w^{(t)}$
 \caption{The $\beta $-perceptron algorithm}
 \label{alg:AggPer}
\end{algorithm}

\medskip

\begin{theorem}[\cite{links,stab}]\label{R-perc}
 The $\beta$-perceptron algorithm performs at most $\frac{2\beta +R^2}{(\eps^*)^2}$ updates and achieves a margin of at least $\frac{\beta\epsilon ^ *}{2\beta+R^2} $.

\end{theorem}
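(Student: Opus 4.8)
The plan is to mimic Novikoff's two-part analysis, tracking how the relaxed update condition $y_i w^{(t)} \bigcdot x_i < \beta$ changes each estimate. Let $w^*$ be the optimal unit vector with $\marg(w^*,Y) = \eps^*$, and let $T$ be the number of updates performed.

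First I would redo \emph{Part I} (projection grows linearly). Nothing changes here: whenever we update on example $x_i$, we have $y_i x_i \bigcdot w^* \geq \eps^*$, so $w^{(t)} \bigcdot w^* \geq \eps^* t$ by induction, hence $w^{(T)} \bigcdot w^* \geq \eps^* T$.

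Next I would redo \emph{Part II} (norm grows sub-linearly), and this is where the parameter $\beta$ enters. In an update step we get
\[
\|w^{(t)}\|^2 = \|w^{(t-1)}\|^2 + 2 y_i x_i \bigcdot w^{(t-1)} + \|x_i\|^2 < \|w^{(t-1)}\|^2 + 2\beta + R^2,
\]
since the update is triggered precisely when $y_i w^{(t-1)} \bigcdot x_i < \beta$ (and $\|x_i\|^2 \le R^2$). By induction $\|w^{(T)}\|^2 < (2\beta + R^2) T$.

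Finally I would combine the two bounds. Cauchy--Schwarz gives $\eps^* T \le w^{(T)} \bigcdot w^* \le \|w^{(T)}\| < \sqrt{(2\beta+R^2)T}$, so $(\eps^*)^2 T^2 < (2\beta+R^2) T$, which yields $T < \frac{2\beta+R^2}{(\eps^*)^2}$ updates, proving the first claim. For the margin claim, observe that the algorithm only terminates when $y_i w^{(T)} \bigcdot x_i \geq \beta$ for every $i$; dividing by $\|w^{(T)}\|$ shows the output achieves margin at least $\beta / \|w^{(T)}\| > \beta / \sqrt{(2\beta+R^2)T}$. Plugging in the upper bound $T < (2\beta+R^2)/(\eps^*)^2$ gives margin $> \beta \eps^* / (2\beta+R^2)$, as desired. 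The only mildly delicate point is making sure the inequalities in the termination/margin step are chained in the right direction (we need an \emph{upper} bound on $\|w^{(T)}\|$ to get a \emph{lower} bound on the margin, and the upper bound on $T$ feeds into exactly that); everything else is the routine Novikoff computation with $R^2$ replaced by $2\beta + R^2$.
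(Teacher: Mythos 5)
Your proposal is correct and follows essentially the same route as the paper: Novikoff's two-part analysis with the norm-growth bound $\|w^{(t)}\|^2 \leq (2\beta+R^2)t$, giving the update bound $T \leq \frac{2\beta+R^2}{(\eps^*)^2}$, and then the margin bound from the termination condition $y_i w^{(T)} \bigcdot x_i \geq \beta$ combined with $\|w^{(T)}\| \leq \frac{2\beta+R^2}{\eps^*}$. Nothing differs beyond strict-versus-weak inequalities, which are immaterial here.
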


\begin{proof}

We only replaced that $<0$ condition in the while loop
by a $<\beta$ condition, for some $\beta > 0$.
As before, by induction
$$\|\wt\|^2 = 
\|\wto\|^2+ 2y_i\xi\bigcdot \wto +\|\xi\|^2
\leq (2 \beta+R^2)t$$ 
and
$$1 \geq \frac{\wt \bigcdot w^* }{\|\wt\| \|w^*\|} \geq \frac{\eps^*}{\sqrt{2\beta +R^2}}\sqrt{t}$$
where $R = \max_{i} \|x_i\|$. 
The number of iterations is thus at most $\frac{2\beta +R^2}{(\eps^*)^2}$.
In addition, by choice, for all $i$,
$$y_i w^{(t)} \bigcdot x_i \geq \beta.$$
So, since
$$\|w^{(t)}\| \leq \sqrt{(2 \beta+R^2)t}
\leq \frac{2\beta +R^2}{\eps^*},$$
we get
$$\marg (\wt,S) \geq  \frac{\beta\eps^*}{2\beta +R^2}.$$  
\end{proof}


To remove the dependence on $R$
in the output's margin above,
{we propose to rescale $\beta$ according to the observed
examples.}

\begin{algorithm}[H]

%
%
\textbf{initialize:} $w^{(0)}=\vec{0}$ and $t=0$ and $\beta = 0$

  \While {$\exists i$ with $y_i w^{(t)} \bigcdot x_i \leq \beta$}{
  $w^{(t+1)}=w^{(t)}+y_i x_i$ \\
  
  $t = t+1$\\
  \If { $\beta <  \|\xi \|^2$ } {  $\beta = 4\|\xi \|^2 $ 
 \\ }
 }
 return $w^{(t)}$
 \caption{The $R$-independent perceptron algorithm}
 \label{alg:Rindep}
\end{algorithm}

\begin{theorem}\label{R-perc}
 The $R$-independent perceptron algorithm performs at most $\tfrac{10 R^2}{(\eps^*)^2}$ updates and achieves a margin of at least $\frac{\eps^*}{3}$.
 \end{theorem}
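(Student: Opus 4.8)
The plan is to rerun Novikoff's two–part analysis, the only new ingredient being some bookkeeping for the time–varying threshold $\beta$. Write $w^{(t)}$ and $\beta^{(t)}$ for the hypothesis and the value of $\beta$ after $t$ updates, let $T$ be the total number of updates, let $w^*$ be a unit vector with $\marg(w^*,Y)=\eps^*$, and set $\beta_f=\beta^{(T)}$. Two structural remarks about $\beta$ carry the argument. First, $\beta$ is non-decreasing and only ever equals $0$ or $4\|x_i\|^2$ for an example $x_i$, so $\beta_f\le 4R^2$. Second, every example $x_i$ that is ever used in an update satisfies $\|x_i\|^2\le\beta_f$: at that update either the test $\beta<\|x_i\|^2$ failed, so $\|x_i\|^2\le\beta^{(t-1)}\le\beta_f$, or it succeeded and $\beta$ was raised to $4\|x_i\|^2\le\beta_f$. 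There is no real obstacle here; the only subtle point is that one should use the first remark for the iteration count and the second for the margin.

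Part I is unchanged from Novikoff's analysis: each update adds $y_i x_i$, and $y_i x_i\bigcdot w^*\ge\marg(w^*,Y)=\eps^*$, so $w^{(T)}\bigcdot w^*\ge\eps^*T$ and hence $\|w^{(T)}\|\ge\eps^*T$ by Cauchy--Schwarz. Part II records that, since an update at step $t$ is triggered by $y_i x_i\bigcdot w^{(t-1)}\le\beta^{(t-1)}$,
\[
\|w^{(t)}\|^2=\|w^{(t-1)}\|^2+2y_i x_i\bigcdot w^{(t-1)}+\|x_i\|^2\le\|w^{(t-1)}\|^2+2\beta^{(t-1)}+\|x_i\|^2 .
\]
Summing over the $T$ updates and bounding each increment by $2\beta_f+R^2\le 9R^2$ (via $\beta^{(t-1)}\le\beta_f\le 4R^2$ and $\|x_i\|^2\le R^2$) gives $\|w^{(T)}\|^2\le 9R^2T$; combined with Part I, $(\eps^*T)^2\le 9R^2T$, so $T\le 9R^2/(\eps^*)^2\le 10R^2/(\eps^*)^2$. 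In particular the algorithm halts.

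For the margin, sum the same inequality but bound each increment instead by $2\beta_f+\|x_i\|^2\le 3\beta_f$, using the second remark; this yields $\|w^{(T)}\|^2\le 3\beta_f T$, hence (with Part I) $T\le 3\beta_f/(\eps^*)^2$ and therefore $\|w^{(T)}\|\le\sqrt{3\beta_f T}\le 3\beta_f/\eps^*$. When the algorithm stops the while–condition fails for every example, i.e.\ $y_i w^{(T)}\bigcdot x_i>\beta_f$ for all $i$, so the normalized output satisfies
\[
\marg\!\left(\frac{w^{(T)}}{\|w^{(T)}\|},Y\right)=\frac{\min_i y_i w^{(T)}\bigcdot x_i}{\|w^{(T)}\|}\ge\frac{\beta_f}{\|w^{(T)}\|}\ge\frac{\eps^*}{3} .
\]
The point of the adaptive rescaling $\beta\mapsto 4\|x_i\|^2$ is exactly this: it forces $\beta_f$ to be comparable to the squared norms of all the examples actually used, so the $\beta_f$'s cancel in the margin estimate and no dependence on $R$ survives there.
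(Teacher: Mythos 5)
Your proof is correct and delivers both claims, but the bookkeeping for the time-varying $\beta$ is genuinely different from the paper's. Both arguments share Novikoff's two-part skeleton; the paper, however, partitions the run into phases according to the current value of $\beta$, applies the fixed-$\beta$ analysis (the $\beta$-perceptron bound $\frac{2\beta+R^2}{(\eps^*)^2}$) within each phase, uses that $\beta$ at least quadruples at every change so that there are at most roughly $\lceil\log(R/r)\rceil$ phases, and sums the resulting geometric series, concluding a bound of order $(R/\eps^*)^2$; the $\eps^*/3$ margin is then asserted rather than derived in detail. You skip the phase decomposition entirely: since $\beta$ is non-decreasing and its final value satisfies $\beta_f\le 4R^2$, every increment of $\|w^{(t)}\|^2$ is at most $2\beta_f+R^2\le 9R^2$, which gives $T\le 9R^2/(\eps^*)^2\le 10R^2/(\eps^*)^2$ in a single pass; and your observation that every example actually used in an update has $\|x_i\|^2\le\beta_f$ yields the per-step bound $3\beta_f$, so the $\beta_f$'s cancel and the margin bound $\eps^*/3$ falls out cleanly. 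Your route buys explicit constants (the paper's geometric-series computation, read literally, only gives $O((R/\eps^*)^2)$ with a constant larger than $10$) and a complete margin argument, which is the part the paper leaves implicit; the paper's route buys modularity, reusing the fixed-$\beta$ theorem as a black box per phase and making the number of $\beta$-changes explicit. One cosmetic point: the final cancellation needs $\beta_f>0$, which holds because $\eps^*>0$ forces all examples to have positive norm, so $\beta$ becomes positive after the very first update; the paper is equally silent on this.
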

 
 \begin{proof}
 This version of the algorithm guarantees a margin of 
$\eps^* /3$ coupled with a running time comparable to the original algorithm
{\em without} knowing $R$. 
Indeed, 
to bound the running time, observe that before a change in $\beta$ occurs, there could be at most $\frac{2\beta +R^2}{\eps ^2}$ errors
(as before for the relevant $\beta$ and $R$). 
The amount of changes in $\beta$ is at most
$\lceil\log (R/r)\rceil$, where $r=\min_i \|\xi \|$.
The overall running time is at most
\begin{align*}
 \sum_{k=1}^{\lceil\log (R/r)\rceil}     \frac{2\bigcdot 4 \left| x_{i_k} \right|^2 +(2 \left| x_{i_k} \right| )^2}{(\eps^*)^2} & 
 \leq 2 \bigcdot   \sum_{k=1}^{\lceil\log (R/r)\rceil}  \dfrac{3\bigcdot 4^k r^2}{\eps ^2} \\
 & \leq 6\bigcdot 4/3\bigcdot 4^{\lceil\log (R/r)\rceil} \dfrac{r^2}{\eps ^2} = O((R/\eps^*)^2).
\end{align*}
 \end{proof}

Finally, if one would like
to improve upon the $\tfrac{\eps^*}{3}$ guarantee,
we suggest {to change $\beta$
with time.}
To run the algorithm,
we should first decide
how well do we want to approximate
the optimal margin.
To do so, we need to choose the parameter $\alpha \in (1,2)$;
the closer $\alpha$ is to $2$,
the better the approximation is
{(see Theorem~\ref{inf-perc})}.

\begin{algorithm}[H]
%
%
%
\textbf{initialize:} $w^{(0)}=\vec{0}$ and $t=0$ and $\beta = 0$ and $\alpha \in (1,2) $

  \While {$\exists i $ with $y_i w^{(t)} \bigcdot x_{i} \leq \beta$}{
  $w^{(t+1)}=w^{(t)}+y_i x_{i}$ \\
 
  $t = t+1$\\
  
   $\beta = 0.5 ((t+1)^\alpha - t^\alpha -1)$ \\
  
 }
 return $w^{(t)}$
 \caption{The $\infty$-perceptron algorithm}
 \label{alg:PerInf}
\end{algorithm}  

\medskip

\begin{theorem}\label{inf-perc}
If $R \leq 1$, 
the $\infty$-perceptron algorithm performs at most  
$(1/\eps^*)^{2/(2-\alpha)}$
updates and achieves a margin of at least $\frac{\alpha \epsilon ^ *}{2}$.
\end{theorem}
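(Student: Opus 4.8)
The plan is to rerun the two-part Novikoff analysis used for the earlier variants, now with the time-dependent threshold $\beta_{t}=\tfrac12\big((t+1)^{\alpha}-t^{\alpha}-1\big)$, which is designed precisely so that the cumulative growth of $\|w^{(t)}\|^{2}$ telescopes. Unwinding the pseudocode, the $s$-th update is made on some example $i_{s}$ with $y_{i_{s}}\,w^{(s-1)}\bigcdot x_{i_{s}}\le\gamma_{s}$, where $\gamma_{s}=\tfrac12\big(s^{\alpha}-(s-1)^{\alpha}-1\big)$, and after $T$ updates the algorithm halts only once $y_{i}\,w^{(T)}\bigcdot x_{i}>\gamma_{T+1}$ for every $i$. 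Fix the optimal unit normal $w^{*}$, so that $y_{i}\,x_{i}\bigcdot w^{*}\ge\eps^{*}$ for all $i$.

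For the norm, exactly as in Part~II, $\|w^{(s)}\|^{2}=\|w^{(s-1)}\|^{2}+2y_{i_{s}}\,w^{(s-1)}\bigcdot x_{i_{s}}+\|x_{i_{s}}\|^{2}\le\|w^{(s-1)}\|^{2}+2\gamma_{s}+R^{2}$; since $R\le1$ this is at most $\|w^{(s-1)}\|^{2}+s^{\alpha}-(s-1)^{\alpha}$, and summing over $s=1,\dots,T$ telescopes to $\|w^{(T)}\|^{2}\le T^{\alpha}$. For the projection, exactly as in Part~I, $w^{(T)}\bigcdot w^{*}=\sum_{s=1}^{T}y_{i_{s}}\,x_{i_{s}}\bigcdot w^{*}\ge\eps^{*}T$. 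Combining the two estimates, $\eps^{*}T\le w^{(T)}\bigcdot w^{*}\le\|w^{(T)}\|\le T^{\alpha/2}$, hence $T^{1-\alpha/2}\le 1/\eps^{*}$, i.e.\ $T\le(1/\eps^{*})^{2/(2-\alpha)}$, which is the asserted bound on the number of updates.

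For the margin, the output $w^{(T)}$ satisfies $\min_{i}y_{i}\,w^{(T)}\bigcdot x_{i}>\gamma_{T+1}$, so the unit vector $w^{(T)}/\|w^{(T)}\|$ attains margin at least $\gamma_{T+1}/\|w^{(T)}\|\ge\gamma_{T+1}/T^{\alpha/2}$. By convexity of $x\mapsto x^{\alpha}$ (valid since $\alpha>1$), $(T+1)^{\alpha}-T^{\alpha}\ge\alpha T^{\alpha-1}$, so $\gamma_{T+1}$ is (up to the additive $-\tfrac12$) at least $\tfrac{\alpha}{2}T^{\alpha-1}$, making the margin at least (essentially) $\tfrac{\alpha}{2}T^{\alpha/2-1}$. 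Finally, since $\alpha/2-1<0$ the map $T\mapsto T^{\alpha/2-1}$ is decreasing, so the update bound $T\le(1/\eps^{*})^{2/(2-\alpha)}$ yields $T^{\alpha/2-1}\ge\big((1/\eps^{*})^{2/(2-\alpha)}\big)^{\alpha/2-1}=\eps^{*}$, and the margin is at least $\tfrac{\alpha\eps^{*}}{2}$.

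The one delicate point is what ``essentially'' is hiding: the additive $-1$ inside $\beta_{t}$ — which is exactly what makes the norm sum telescope to $T^{\alpha}$ rather than $T^{\alpha}+T$ — makes $\gamma_{T+1}$ fall short of $\tfrac{\alpha}{2}T^{\alpha-1}$ by roughly $\tfrac12$, whereas the convexity bound $(T+1)^{\alpha}-T^{\alpha}\ge\alpha T^{\alpha-1}$ has only a lower-order surplus. Closing this gap cleanly requires a little extra care — e.g.\ exploiting the slack in $\eps^{*}\le T^{\alpha/2-1}$ coming from $T$ being an integer strictly below $(1/\eps^{*})^{2/(2-\alpha)}$, using a sharper (trapezoidal) estimate for $(T+1)^{\alpha}-T^{\alpha}$, or stating the guarantee in the regime of small $\eps^{*}$ where the correction is negligible — but this is the only subtlety; the remainder is the routine Novikoff bookkeeping specialized to the varying $\beta_{t}$.
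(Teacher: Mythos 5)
Your bookkeeping is the paper's argument almost line by line: the same telescoping norm bound $\|w^{(T)}\|^{2}\le T^{\alpha}$ (which is exactly what the $-1$ inside $\beta$ is for), the same projection bound $w^{(T)}\bigcdot w^{*}\ge \eps^{*}T$, hence the same update bound $T\le(1/\eps^{*})^{2/(2-\alpha)}$; that part is complete and correct. The margin part also starts as in the paper (output margin at least $\gamma_{T+1}/\|w^{(T)}\|$, convexity bound $(T+1)^{\alpha}-T^{\alpha}\ge\alpha T^{\alpha-1}$), but the subtlety you flag at the end is a genuine gap, and the missing ingredient is precisely the step the paper takes there: it lower-bounds the margin by the full ratio $f(t)=\frac{0.5\left((t+1)^{\alpha}-t^{\alpha}-1\right)}{t^{\alpha/2}}$, proves in Appendix~\ref{app:calc} that $f$ is decreasing in $t$, and therefore evaluates it at the largest possible stopping time $t=(1/\eps^{*})^{2/(2-\alpha)}$. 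At that value the additive $-\tfrac12$ costs only $(\eps^{*})^{\alpha/(2-\alpha)}$, a term superlinear in $\eps^{*}$, and the proof concludes margin at least $\tfrac{\alpha\eps^{*}}{2}-(\eps^{*})^{\alpha/(2-\alpha)}$ (with a separate argument, replacing $w^{*}$ by a slightly suboptimal $\tilde w$, for $\eps^{*}$ near $1$).

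Your route instead splits $f(T)\ge\tfrac{\alpha}{2}T^{\alpha/2-1}-\tfrac12 T^{-\alpha/2}$ and applies monotonicity only to the first term, so the leftover deficit $\tfrac12 T^{-\alpha/2}$ is evaluated at the \emph{actual} stopping time $T$, where it is largest exactly when $T$ is small and is not shown to be negligible; ruling out small $T$ needs an extra argument (e.g.\ at termination every example has $y_i w^{(T)}\bigcdot x_i>\beta$, so $\beta<\eps^{*}\|w^{(T)}\|\le\eps^{*}T^{\alpha/2}$, which forbids small $T$ when $\eps^{*}$ is small), and none of your three suggested fixes is carried out — the first two would not by themselves beat a constant-size $\tfrac12$. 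So: update count, correct and identical to the paper; margin, same strategy but missing the monotonicity-of-the-ratio step that makes the correction provably lower order. You are right, incidentally, that the literal constant $\tfrac{\alpha\eps^{*}}{2}$ is not reached exactly by this argument — the paper itself only obtains it up to the $(\eps^{*})^{\alpha/(2-\alpha)}$ term and then discusses how to choose $\alpha$ accordingly.
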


\begin{proof}
For simplicity, we assume here that $R=\max_i \|\xi \|=1$.{
The idea is as follows. 
The analysis of the classical perceptron 
relies on the fact that $\|w^{(t)}\|^2\leq t$ in each step.
On the other hand, in an ``extremely aggressive'' version of the perceptron
that always updates, one can only obtain a trivial bound
$\|w^{(t)}\|^2\leq t^2$ (as $w^{(t)}$ can be the sum of $t$ unit vectors in the same direction).
The update rule in the version below is tailored so that a bound of
$\|w^{(t)}\|^2\leq t^\alpha$ for $\alpha\in(1,2)$ is maintained.}

Here we use that
for $t \geq2 $, 
$$\| \wt \|^2  \leq \|\wto \|^2 + (t^\alpha - (t-1)^\alpha -1) + \|\xi \|^2.$$ 
By induction, for all $t \geq 0$,
$$\| \wt \|^2  \leq t^\alpha.$$
This time
$$1 \geq \frac{\wt \bigcdot w^* }{\|\wt\| \|w^*\|} \geq  \frac{\eps^* t}{t^{\alpha/2}}.$$
So, the running time is at most
$(1/\eps^*)^{2/(2-\alpha)}$.

The output's margin is at least
\begin{align}
\frac{0.5((t+1)^\alpha - t^\alpha -1)}{t^{\alpha/2}}.
\label{eqn:deriv}
\end{align}
This is decreasing function for $t>0$,
since its derivative is at most zero (see Appendix~\ref{app:calc}).

Since $(t+1)^\alpha - t^\alpha \geq \alpha t^{\alpha-1}$ for $t \geq 0$,
the output's margin is at least
$$
0.5 \alpha \frac{  (1/\eps^*)^{2(\alpha-1)/(2-\alpha)} -1}{(1/\eps^*)^{
\alpha/(2-\alpha)} }
= 0.5 \alpha \eps^* - (\eps^*)^{\alpha/(2-\alpha)} .$$
%
%

So we can get arbitrarily close to the true margin by setting $\alpha = 2(1-\delta)$
for some small $0 < \delta < 0.5$ of our choice.
This gives margin
$$(1-\delta)\eps^* -(\eps^*)^{(2-\delta)/\delta}
\geq \eps^* \big(1-\delta - (\eps^*)^{1/\delta} \big).$$
The running time, however, becomes  $(1/ \eps^*)^{1/\delta}$.

When $\eps^*$ is very close to $1$,
the lower bound on the margin above may not be meaningful.
We claim that the margin of the output is still
close to $\eps^*$ even in this case.
To see this, let $\tilde w$ be a hyperplane with margin
$\tilde \eps = (1 - \delta \ln(1/\delta)) \eps^*$.
We can carry the argument above with $\tilde w$ instead of $w^*$,
and get that the margin is at least
$$\tilde \eps \big(1-\delta - (\tilde \eps)^{1/\delta} \big)
> (1-2 \delta - \delta \ln(1/\delta)) \eps^*.$$
So we can choose $\delta$ small enough, without knowing
any information on $\eps^*$, and get an almost
optimal margin.

\end{proof}

\begin{rem}
The bound on the running time is sharp,
as the following example shows.
Let  ${\cal X}=\{\left((\sqrt{1-\eps ^2} , \eps ),1\right),{ \left(( \sqrt{1-\eps ^2} , -\eps ),-1\right)\ }$.
These two points are linearly separated with margin $\Omega(\eps)$.
The algorithm stops after $\Omega \big((1/\eps)^{2/(2-\alpha)} \big)$  
iterations (if $\eps$ is small enough and $\alpha$ close enough to~$2$).
\end{rem}

\medskip

\begin{rem}
Algorithms \ref{alg:Rindep} and \ref{alg:PerInf} can be naturally combined to a {single} algorithm that 
arrives arbitrarily close to the optimal margin {without assuming that $R \leq 1$.}
\end{rem}

\section{Application for Neural Networks}

\label{sec:AppNN}

Our results explain some choices that are made in practice, and can potentially help to improve them.
Observe that if one applies gradient descent
{on} a neuron of the form $\ReLu(w \bigcdot x)$
with loss function of the form $\ReLu(\beta - y_x w \bigcdot x)$ with $\beta=0$
then one gets the same update rule as in the perceptron algorithm.
Choosing $\beta =1$ corresponds to using the hinge loss
to drive the learning process.
The fact that $\beta=1$ yields provable bounds on the
output's margin of a single neuron {suggests a formal evidence that supports the benefits} of the hinge loss.

Moreover, {in practice, $\beta$ is treated as a hyper-parameter and tuning it
is a common challenge that {needs to be} addressed in order to maximize performance.}
We {proposed} a couple of new options for choosing and 
updating $\beta$ throughout the training process
{that may contribute towards a more systematic approach for setting $\beta$} (see Algorithms~\ref{alg:PerInf} and~\ref{alg:Rindep}).
{Theorems~\ref{R-perc} and~\ref{inf-perc} explain} the theoretical advantages of these options
in the case of a single neuron.

We also provide some experimental data.
Our experiments verify that our suggestions for choosing $\beta$ {can} indeed yield better results.
We used the MNIST database~\cite{LeCun} of handwritten digits as a test case {with no preprocessing}.
We used a simple and standard neural network with one hidden layer consisting
of {800/300 neurons and 10 output neurons
{(the choice of 800 and 300 is the same as 
in Simard et al.~\cite{Simard} and Lecun et al.~\cite{LeCun}).}}
We trained the network by back-propagation (gradient descent).
The loss function of each output neurons of the form
$\ReLu(w \bigcdot G(x) )$ where $G(x)$ is the output of the hidden layer  
is $\ReLu( - y_x w \bigcdot G(x)  + \beta)$
for different $\beta$'s.
This loss function is $0$ if $w$ provides a correct and confident 
(depending on $\beta$) classification of $x$
and is linear in $G(x)$ otherwise. 
This choice updates the network even 
when the network classifies correctly but {with less than $\beta$ confidence.}
It has the added value of yielding simple 
and efficient calculations compared to other choices (like cross entropy
or soft-max).\footnote{
{
An additional added value is that with this loss function
there is a dichotomy, either an error occurred or not.
This dichotomy can be helpful in making decisions throughout the learning process.
For example, instead of choosing the batch-size to be of fixed size $B$,
we can choose the batch-size in a dynamic but simple way:
just wait until $B$ errors occurred.
}}

We tested four values of $\beta$ as shown in Figure 1.
In two tests, the value of $\beta$ is fixed in time\footnote{Time
is measured by the number of updates.} to be $0$ and $1$.
In two tests, $\beta$ changes with the time $t$
in a sub-linear fashion. This choice can be better understood
after reading the analysis of Algorithm~\ref{alg:PerInf}.
Roughly speaking, the analysis predicts that
$\beta$ should be of the form
$t^{1-c}$ for $c>0$, and that the smaller $c$ is,
the smaller the error will be.
This prediction is indeed verified in the experiments;
it is evident that choosing $\beta$ in a time-dependent manner
yields better results.
For comparison, the last row of the table shows 
the error of the two-layer MLP of the same size 
that is driven by the cross-entropy loss~\cite{Simard}. {In fact, our network of 300 neurons performed better than all the general purpose networks with 300 neurons even with preprocessing of the data
that appear in http://yann.lecun.com/exdb/mnist/.}

\begin{figure}
\centering

\begin{minipage}{.5\textwidth}
  \centering
\begin{tabular}{ |c|c| } 
 \hline
 & test error 
 \\
\hline \hline 
$\beta = 0$ & \text{no convergence} \\
 \hline
$\beta = 1$ & 1.5 \% \\
 \hline
$\beta \approx t^{0.4}$ & 1.44 \% \\
 \hline
$\beta \approx t^{0.75}$ & 1.35 \% \\
 \hline
 \hline
cross-entropy \cite{Simard} & 1.6 \% \\
 \hline
\end{tabular}
\vspace{0.1cm}
\caption{one hidden layer with 800 neurons}

\end{minipage}%
\begin{minipage}{.5\textwidth}
\vspace{0.4cm}
  \centering
 \begin{tabular}{ |c|c| } 
 \hline
 & test  error 
 \\
\hline \hline

$\beta \approx t^{0.75}$ & 1.49 \% \\
 \hline
 \hline
 mean square error \cite{LeCun} & 4.7 \% \\
 \hline
MSE, [distortions] \cite{LeCun} & 3.6 \% \\
 \hline
 deskewing  \cite{LeCun} & 1.6 \% \\
 \hline
\end{tabular}
\vspace{0.1cm}
\caption{one hidden layer with 300 neurons}
\end{minipage}

\end{figure}

{ Finally, a natural suggestion that emerges from 
our work is to add $\beta >0$ as
a parameter for each individual neuron in the network,
and not just to the loss function.
Namely, to translate the input to a $\ReLu$ neuron by $\beta$.
The value of $\beta$ may change during the learning process.
Figuratively, this can be thought of as 
``internal clocks''  of the neurons.}



\section{Convex Separation}

\label{sec:convSep}
Linear programming (LP) is a central paradigm in computer science
and mathematics.
LP duality is a key ingredient in many algorithms and proofs,
and is deeply related to von Neumann's minimax
theorem that is seminal in game theory~\cite{Neumann}.
Two related and fundamental geometric properties are
Farkas' lemma~\cite{Farkas},
and the following separation theorem.

\medskip

\begin{theorem}[Convex separation theorem]
For every non empty convex sets $K ,L\subset \RR^d$,
{precisely one} of the following holds:
(i) $\dist(K,L) = \inf \{ \|p-q\| : p \in K, q \in L \} = 0$, or (ii)
there is a hyperplane separating $K$ and $L$.
\end{theorem}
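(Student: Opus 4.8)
The plan splits along the two directions of ``precisely one''. For the easy direction — that (i) and (ii) cannot both hold — I read ``separating hyperplane'' as one with a genuine gap (two internally tangent disks are at distance $0$ and still share a supporting hyperplane, so without a gap the dichotomy is false). If a unit vector $w$ satisfies $\gamma := \inf_{p\in K} w\bigcdot p - \sup_{q\in L} w\bigcdot q > 0$, then $\|p-q\| \ge w\bigcdot(p-q) \ge \gamma$ for all $p\in K$, $q\in L$, so $\dist(K,L)\ge\gamma>0$ and (i) fails. The content is the other direction: assuming $\delta := \dist(K,L) > 0$, construct such a $w$.

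The reduction I would use collapses $K$ and $L$ into one set: put $C := \overline{\{\,p-q : p\in K,\ q\in L\,\}}$, a closed convex set with $\dist(0,C) = \dist(K,L) = \delta > 0$, in particular $0\notin C$. Any $w$ with $w\bigcdot c \ge \gamma$ for all $c\in C$ (some $\gamma>0$) gives $w\bigcdot p - w\bigcdot q \ge \gamma$ for all $p,q$; fixing one $p_0\in K$ shows $\sup_{q\in L} w\bigcdot q \le w\bigcdot p_0 - \gamma < \infty$, so $\{x : w\bigcdot x = \sup_{q\in L} w\bigcdot q + \gamma/2\}$ is a gap separator of $K$ and $L$. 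Thus everything reduces to separating the origin from a closed convex set $C$ that omits it.

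To finish, the short route is the nearest-point projection: let $c^*\in C$ minimize $\|c\|$ over $C$ (it exists and is unique since $C$ is closed and convex), so $\|c^*\| = \delta$; first-order optimality of $c^*$ puts all of $C$ in the halfspace $\{c : (c-c^*)\bigcdot c^* \ge 0\}$, i.e.\ $c\bigcdot c^* \ge \|c^*\|^2 = \delta^2$ for every $c\in C$, so $w = c^*$ works with margin $\delta$. The route that matches the perceptron's-compression theme of this section, which I would present alongside it, is: when $K,L$ are bounded, run the $\beta$-perceptron (Algorithm~\ref{alg:AggPer}) on the points of $C$ all labelled $+1$; its convergence analysis shows it halts after at most $(2\beta+R^2)/(\eps^*)^2$ updates, and the output $w^{(T)} = \sum_s c_{i_s}$ is then a \emph{sparse} separator — a sum of boundedly many points of $C$ — which is exactly the sparse separation lemma, while running the same counting in reverse reproves Novikoff's theorem.

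The step I expect to require the most care is reconciling this finitary, bounded perceptron with the fact that $K$ and $L$ (hence $C$) may be unbounded, and extracting a true gap rather than a merely weak separation: the fix is to separate the origin from the compact set $C\cap\overline{B}(0,2\delta)$ and then propagate the resulting inequality to all of $C$ along the segments emanating from $c^*$. The one genuinely geometric input — that a closed convex set avoiding the origin has a positive margin — is unavoidable; the perceptron supplies the algorithm and the sparsity, not that input.
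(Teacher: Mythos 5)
Your projection proof is correct, but it takes a genuinely different route from the paper's. The paper treats this theorem as a corollary of the stronger Sparse Separation Lemma (Lemma~\ref{lem:Farkas}), which it proves by running the $\beta$-perceptron (Algorithm~\ref{alg:AggPer} with $\beta=1$) on the lifted sets $\tilde K,\tilde L$; the dichotomy there comes entirely from the norm-growth bound $\|w^{(t)}\|\le\sqrt{6t}$ (under the paper's normalization): if the algorithm has not stopped by a time $T$ with $\sqrt{6/T}<\eps/4$, then $\|\frac{1}{T}w^{(T)}\|<\eps/4$ forces $\dist(K,L)<\eps$, while if it has stopped, the stopping condition itself makes $w^{(T)}$ a separator of margin $>\eps/30$ that is sparse by construction; taking $\eps=\dist(K,L)$ then yields the present theorem in that normalized (bounded) setting. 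You instead pass to $C=\overline{K-L}$ and use the nearest-point projection, i.e.\ precisely the kind of ``geometric construction'' (hard-SVM-like) that the paper remarks its proof avoids. The trade-off: your argument covers unbounded $K,L$ and the exact dichotomy at $\dist=0$ in a few lines; the paper's argument yields sparsity of the separator, an online algorithm, and a proof that also works in infinite-dimensional Hilbert space for norm-bounded sets, but only after normalizing the sets and only in the $\eps$-quantified form.

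Two corrections to your side remarks. First, your claim that the ``geometric input'' (a closed convex set avoiding the origin has positive margin) is unavoidable is contradicted by the paper's proof, which never invokes it: no optimal $w^*$ and no nearest point appear in the proof of Lemma~\ref{lem:Farkas}, only the either/or just described. Relatedly, quoting the mistake bound $(2\beta+R^2)/(\eps^*)^2$ in your perceptron sketch would be circular in this context, since it presupposes $\eps^*>0$, i.e.\ the existence of the very separator being constructed; the correct use is the unconditional dichotomy. Second, your proposed fix for unbounded sets --- separate the origin from $C\cap\overline{B}(0,2\delta)$ and ``propagate along segments emanating from $c^*$'' --- fails for an arbitrary $w$ that separates the compact piece: a far-away point $c\in C$ can have $w\bigcdot c<0$ even though $w$ has positive margin on $C\cap\overline{B}(0,2\delta)$ (e.g.\ $C$ the convex hull of $(0,1)$ and $(M,-1/2)$ for large $M$, with $w=e_2$). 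The propagation is valid for the specific choice $w=c^*$, but then you are back to the projection argument --- harmless here, since your main proof already handles the general case on its own.
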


We observe that the following stronger version of
the separation theorem follows from the perceptron's compression
(a similar version of Farkas' lemma can be deduced as well).

%
%
%

\medskip

\begin{lem}[Sparse Separation]
\label{lem:Farkas}
For every non empty convex sets $K,L \subset \RR^d$ 
so that $\sup \{\|p - q\| : p \in K , q \in L\} =1$
and every $\eps >0$, one of the following holds:
\begin{enumerate}[label=(\roman*)]
\item $\dist(K,L) < \eps$.
\item 
There is a hyperplane $H = \{ x : w \bigcdot x = b\}$ 
separating $K$ from $L$ so that its normal vector is ``sparse'':

-- $\frac{w\bigcdot p - b}{\|w\|} > \frac{\eps}{30}$ for all $p \in K$,

-- $\frac{w\bigcdot q - b}{\|w\|} < - \frac{\eps}{30}$ for all $q \in L$, and

-- $w$ is a sum of at most $(10/\eps)^2$ 
points in $K$ and $-L$.
\end{enumerate}
\end{lem}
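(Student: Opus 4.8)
The plan is to reduce the statement about convex bodies to a statement about finitely many points, to which the $\beta$-perceptron (Theorem~\ref{R-perc} with a suitable choice of $\beta$) applies, and then take a limit. First I would consider the set $\cX = K \cup (-L) \subset \RR^d$, together with the labeling $Y$ that assigns $+1$ to points coming from $K$ and $-1$ to points coming from $-L$; note a separating hyperplane through the origin for $(\cX, Y)$ in this lifted sense corresponds exactly to a hyperplane separating $K$ from $L$. The normalization $\sup\{\|p-q\| : p\in K, q\in L\}=1$ controls $R = \max_{x}\|x\|$ up to a constant after translating the configuration so that it is centered (e.g.\ picking any $p_0\in K$, $q_0\in L$ and translating by $-(p_0+q_0)/2$, so that all relevant points have norm $O(1)$); I would track the constant carefully since the final bounds $\eps/30$ and $(10/\eps)^2$ are explicit.

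Next, assuming case (i) fails, i.e.\ $\dist(K,L)\ge \eps$, I claim the lifted data is linearly separable with margin $\eps^* = \Omega(\eps)$: indeed the standard convex separation theorem gives a separating hyperplane, and the gap $\dist(K,L)\ge\eps$ translates into a quantitative margin bound after the centering normalization. Now run the $\beta$-perceptron on a finite $\eps$-net (or on an arbitrary finite subset) of $\cX$ with $\beta$ chosen of order $R^2$ so that Theorem~\ref{R-perc} yields both a bounded number of updates, $O(1/(\eps^*)^2) = O((1/\eps)^2)$, and an output margin bounded below by a constant times $\eps^*$, hence $\Omega(\eps)$; chasing the constants in Theorem~\ref{R-perc} with $\beta = R^2$ gives margin $\ge \eps^*/3$ and at most $3(R/\eps^*)^2$ updates, which I would calibrate to land at $\eps/30$ and $(10/\eps)^2$ respectively. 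The key structural point — the ``compression'' — is that the output $w^{(T)}$ of the perceptron is literally $\sum_i y_i x_i$ over the (at most $(10/\eps)^2$) examples on which an update occurred, so $w$ is automatically a sum of that many points of $K$ and $-L$, giving the sparsity conclusion for free. The bias $b$ is then recovered by placing the hyperplane midway between the two classes.

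The one genuine subtlety is passing from finite subsets back to the (possibly infinite, possibly non-compact) convex bodies $K$ and $L$: I would argue that since the perceptron's output is a convex combination of boundedly many points with a uniform margin lower bound, a compactness argument (finitely many update points live in a bounded region; pass to a convergent subsequence over an exhausting sequence of finite nets) produces a single $w$ that is a sum of at most $(10/\eps)^2$ points of $K\cup(-L)$ and has margin at least $\eps/30$ against \emph{all} of $K$ and $-L$, not just the net. Here one uses that the margin condition is closed and that the set of ``at most $N$-sparse sums'' is closed in the relevant bounded region. I expect this limiting/compactness step — making sure the sparsity and the uniform margin survive simultaneously — to be the main obstacle; everything else is the already-proven perceptron analysis plus bookkeeping of constants. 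Finally I would check that (i) and (ii) are mutually exclusive: a hyperplane with the stated strict margins forces $\dist(K,L) \ge 2\eps/30 > 0$, and in particular rules out $\dist(K,L) < \eps$ only in the appropriate regime — so I would state the dichotomy as ``at least one holds'' if both cannot be simultaneously forced to fail, which is what the proof delivers.
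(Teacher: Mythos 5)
You have the right algorithmic engine (the $\beta$-perceptron plus its compression), but as written the proposal has three concrete problems. First, your reduction does not encode the statement: taking $\cX=K\cup(-L)$ and labelling points of $K$ by $+1$ and points of $-L$ by $-1$ asks for $w$ with $w\bigcdot p>0$ for $p\in K$ and $w\bigcdot(-q)<0$, i.e.\ $w\bigcdot q>0$, for $q\in L$ --- both bodies end up on the same side of a hyperplane through the origin, so a separator of this labelled set is not a separator of $K$ from $L$. The clean reduction (and the paper's) is to lift $x\mapsto\tilde x=(x,1)\in\RR^{d+1}$ and label the lifted $K$ positively and the lifted $L$ negatively; then the bias $b$ is simply the last coordinate of the perceptron's output, and no ``midway'' recovery of $b$ (which would itself need a margin verification) is required. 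Second, your quantitative path --- classical separation giving affine margin $\ge\eps/2$ when $\dist(K,L)\ge\eps$, degraded by centering and lifting, then fed into the guarantee of Algorithm~\ref{alg:AggPer} (roughly $3(R/\eps^*)^2$ updates and margin $\eps^*/3$) --- does not obviously land below $(10/\eps)^2$ updates; you flagged the constant chase, but with the crude bounds you quote it overshoots, so this is not a formality. Third, your net-plus-subsequence limit yields update points only in the closures $\overline K$ and $\overline L$, while the lemma asserts $w$ is a sum of points of $K$ and $-L$; this is repairable by a perturbation using the strictness of the margins, but it is an extra argument you have not supplied.

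The paper's proof is structured differently and avoids all of this: it runs Algorithm~\ref{alg:AggPer} with $\beta=1$ directly on the (infinite) lifted sets, never invoking the classical separation theorem, a finite net, or any termination guarantee. Since $w^{(t)}$ is a signed sum of $t$ lifted points, convexity lets one write $\frac1t w^{(t)}=\alpha^{(t)}p^{(t)}-(1-\alpha^{(t)})q^{(t)}$ with $p^{(t)}$ in the lifted $K$ and $q^{(t)}$ in the lifted $L$, and the last coordinate forces $\alpha^{(t)}$ to be close to $1/2$ once $\|w^{(t)}\|/t$ is small. The dichotomy then comes from the run length itself: if the algorithm is still running at a time $T$ with $\sqrt{6/T}<\eps/4$, then $\|w^{(T)}/T\|<\eps/4$ forces $\|p^{(T)}-q^{(T)}\|<\eps$, i.e.\ case (i); otherwise it halts within $(10/\eps)^2$ rounds, and the stopping condition $y\,\tilde x\bigcdot w^{(T)}\ge1$ together with $\|w^{(T)}\|\le\sqrt{6T}$ gives margin at least $1/\sqrt{6T}>\eps/30$ and the sparsity for free. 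The missing idea in your proposal is precisely that ``the algorithm runs long $\Rightarrow$ $\dist(K,L)<\eps$'' can be read off the iterate directly; once you have that, the separation theorem, the nets, and the compactness limit all become unnecessary.
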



\begin{proof}

Let $K,L$ be convex sets and $\eps > 0$.
For $x \in \RR^d$, let $\tilde x$ in $\RR^{d+1}$ be the same as
$x$ in the first $d$ coordinates and $1$ in the last
(we have $\|\tilde x\| \leq \|x\|+1$).
We thus get two convex bodies $\tilde K$ and $\tilde L$
in $d+1$ dimensions (using the map $x \mapsto \tilde x$).

Run Algorithm~\ref{alg:AggPer} with $\beta=1$ on inputs that positively label $\tilde K$
and negatively label $\tilde L$.
This produces a sequence of vectors $w^{(0)},w^{(1)},\ldots$
so that $\|\wt\| \leq \sqrt{6t}$ for all $t$.
For every $t>0$, the vector
$w^{(t)}$ is of the form $w^{(t)} = k^{(t)} - \ell^{(t)}$
where $k^{(t)}$ is a sum of $t_1$ elements of $\tilde K$ 
and $\ell^{(t)}$ is a sum of $t_2$ elements of $\tilde L$
so that $t_1+t_2 = t$.
In particular, 
we can write 
$\frac{1}{t} w^{(t)} = \alpha^{(t)} p^{(t)} - (1-\alpha^{(t)}) q^{(t)}$
for $\alpha^{(t)} \in [0,1]$ where
$p^{(t)} \in \tilde K$ and $q^{(t)}  \in \tilde L$ 
({note that the last coordinate of $w^{(t)}$ equals $2\alpha^{(t)}-\frac{1}{2}$}).


{If the algorithm does not terminate after $T$ steps
for $T$ satisfying $\sqrt{6 / T} < \eps /4$ then it follows that
$\| \frac{1}{T} w^{(T)}\| < \eps/4$. In particular, $|\alpha^{(T)} - 1/2| < \eps/8$
and so
\begin{align*}
\frac{\eps}{4} 
& >  \|\alpha^{(t)} p^{(t)} - (1-\alpha^{(t)}) q^{(t)}\|  >  \frac{\| p^{(t)} - q^{(t)}\|}{2} - \frac{\eps}{4},
\end{align*}
which implies that $\dist(K,L) < \eps$.}

{In the complementing case, the algorithm stops after $T < (10/\eps)^2$ rounds.
Let} $w$ be the first $d$ coordinates of $w^{(T)}$
and $b$ be its last coordinate.
For all $p \in K$,
$$\frac{w \bigcdot p + b}{\|w\|} \geq
 \frac{1}{\|w^{(T)}\|} 
\geq \frac{1}{\sqrt{6T}} > \frac{\eps}{30} .$$  
Similarly, for all $q \in L$ we get $\frac{w \bigcdot q + b}{\|w\|} < - \frac{\eps}{30}$.
\end{proof}

The lemma is strictly stronger than the {preceding separation theorem}.
Below, we also explain how this perspective 
yields an alternative proof of Novikoff's theorem
on the convergence of the perceptron~\cite{Nov}.
It is interesting to note that the usual proof of the separation
theorem relies on a concrete construction of the separating
hyperplane that is geometrically similar to hard-SVMs.
The proof using the perceptron, however, does not include
any ``geometric construction'' and yields
a sparse and strong separator
(it also holds in infinite dimensional Hilbert space,
but it uses that the sets are bounded in norm).

\subsubsection*{Alternative Proof of the Perceptron's Convergence}

Assume without loss of generality
that all of examples are labelled positively (by replacing $x$ by $-x$ if necessary).
Also assume that $R = \max_i \|x_i\| = 1$.
As in the proof above, let $w^{(0)},w^{(1)},\ldots$
be the sequence of vectors generated by the perceptron
(Algorithm~\ref{alg:Per}).
Instead of arguing that the projection
on $w^*$ grows linearly with $t$, argue as follows.
The vectors $v^{(1)},v^{(2)},\ldots$ defined by $v^{(t)} = \frac{1}{t} w^{(t)}$
are in the convex hull of the examples and have norm at most
$\|v^{(t)}\|\leq \frac{1}{\sqrt{t}}$.
Specifically, for every $w$ of norm $1$ we have
$v^{(t)} \bigcdot w \leq \frac{1}{\sqrt{t}}$
and so there is an example $x$ so that $x \bigcdot w \leq \frac{1}{\sqrt{t}}$.
This implies that the running time $T$ satisfies $\frac{1}{\sqrt{T}} \geq \eps^*$
since for every example $x$ we have $x \bigcdot w^* \geq \eps^*$.

\subsubsection*{A Game Theoretic Perspective}

The perspective of game theory
turned out to be useful in several
works in learning theory (e.g.~\cite{freund,moran}).
The ideas above have a game theoretic interpretation as well.
In the associated game there are two players. 
A Point player whose
pure strategies are points $v$ in some finite
set $V \subset \RR^d$ so that $\max \{\|v\|: v \in V\} = 1$,
and a Hyperplane player whose pure strategies
are $w$ for $w \in \RR^d$ with $\|w\|=1$.
For a given choice of $v$ and $w$,
the Hyperplane player's payoff is of $P(v,w) = v \bigcdot w$ coins
(if this number is negative, then the Hyperplane player
pays the Point player). The goal of the Point player is thus to minimize
the amount of coins she pays.
A mixed strategy of the Point player is a distribution $\mu$ on $V$,
and of the Hyperplane player is a (finitely supported) distribution $\kappa$ on $\{w : \|w\|=1\}$.
The expected gain is 
$$P(\mu,\kappa) = \E_{(v,w) \sim \mu \times \kappa} P(v,w).$$

\begin{claim}[Sparse Strategies]
\label{clm:GameT}
Let $\eps^*$ be the minimax value of the game:
$$\eps^* = \sup_{\kappa} \inf_{\mu} P(\mu,\kappa) \geq 0.$$
{There is $T  \geq \frac{1}{3(\eps^*)^2}$ (if $\eps^*=0$ then $T=\infty$)
and }a sequence of mixed strategies
$\mu_1,\mu_2,\ldots,\mu_T$ of the Point player so that for all $t\leq T$,
the support size of $\mu_t$ is at most $t$ and
for every
mixed strategy $\kappa$ of the Hyperplane player,
$$P(\mu_t,\kappa) \leq \sqrt{3/t}.$$
\end{claim}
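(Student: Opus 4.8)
The plan is to read off the sequence $\mu_1,\mu_2,\ldots$ directly from a run of the $\beta$-perceptron (Algorithm~\ref{alg:AggPer}) with $\beta=1$, applied to $V$ with every point labelled positively, so that each update has the form $w^{(t+1)}=w^{(t)}+x_i$ for some $x_i\in V$. First I would reuse the standard norm bookkeeping from the analysis of Algorithm~\ref{alg:AggPer}: since $R=\max_{v\in V}\|v\|=1$, each step obeys $\|w^{(t)}\|^2\le\|w^{(t-1)}\|^2+2\beta+R^2\le\|w^{(t-1)}\|^2+3$, hence $\|w^{(t)}\|^2\le 3t$. Setting $v^{(t)}=\tfrac1t w^{(t)}$ (the same device as in the alternative proof of Novikoff's theorem above), this vector is the average of the $t$ points chosen in the first $t$ updates, so $v^{(t)}\in\mathrm{conv}(V)$ and $\|v^{(t)}\|\le\sqrt{3/t}$. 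I would then take $\mu_t$ to be the empirical distribution of those $t$ chosen points (counted with multiplicity): its support has size at most $t$, and $\E_{v\sim\mu_t}[v]=v^{(t)}$, so for every mixed strategy $\kappa$ of the Hyperplane player,
\[
P(\mu_t,\kappa)=\E_{v\sim\mu_t}\E_{w\sim\kappa}[v\bigcdot w]=v^{(t)}\bigcdot\E_{w\sim\kappa}[w]\le\|v^{(t)}\|\cdot\big\|\E_{w\sim\kappa}[w]\big\|\le\sqrt{3/t},
\]
using $\big\|\E_{w\sim\kappa}[w]\big\|\le\E_{w\sim\kappa}\|w\|=1$. This already yields the required estimate for every $t$ up to the number $T$ of updates the algorithm makes.

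Next I would pin down $T$. A preliminary observation is that $\eps^*\le\|p\|$ for every $p\in\mathrm{conv}(V)$: writing $p=\E_{\mu_p}[v]$ for a distribution $\mu_p$ on $V$ gives $\inf_\mu P(\mu,\kappa)\le P(\mu_p,\kappa)=p\bigcdot\E_\kappa[w]\le\|p\|$ for every $\kappa$, hence $\eps^*=\sup_\kappa\inf_\mu P(\mu,\kappa)\le\|p\|$. If the algorithm never halts I set $T=\infty$; then $v^{(t)}\in\mathrm{conv}(V)$ together with $\|v^{(t)}\|\le\sqrt{3/t}\to 0$ forces $0\in\mathrm{conv}(V)$ and so $\eps^*=0$, exactly as the statement allows, and the infinite sequence $\mu_1,\mu_2,\ldots$ witnesses the claim. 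If the algorithm halts after $T$ updates, the exit condition of the while loop says $w^{(T)}\bigcdot v\ge\beta=1$ for every $v\in V$, hence for every $v\in\mathrm{conv}(V)$; choosing $v=v^{(T)}$ gives $T\|v^{(T)}\|^2=w^{(T)}\bigcdot v^{(T)}\ge 1$, so $v^{(T)}\neq 0$. Put $u=v^{(T)}/\|v^{(T)}\|$; then for every $v\in V$,
\[
v\bigcdot u=\frac{v\bigcdot w^{(T)}}{T\,\|v^{(T)}\|}\ge\frac{1}{T\,\|v^{(T)}\|}\ge\frac{1}{T\sqrt{3/T}}=\frac{1}{\sqrt{3T}}.
\]
Thus $\min_{v\in V}v\bigcdot u\ge 1/\sqrt{3T}$, and testing the definition of $\eps^*$ against the deterministic Hyperplane strategy $\kappa=\delta_u$ gives $\eps^*\ge 1/\sqrt{3T}$, i.e.\ $T\ge\tfrac{1}{3(\eps^*)^2}$. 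This $T$, together with the $\mu_t$ above, is the output.

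I expect the only non-routine step to be this lower bound on the running time: it is a ``reverse Novikoff'' argument, exploiting that at the instant it halts the $\beta$-perceptron has exhibited an honest unit separator $u$ whose margin $1/\sqrt{3T}$ cannot beat the game value $\eps^*$. The remaining ingredients — the norm-growth recursion, the identity $\E_{v\sim\mu_t}[v]=v^{(t)}$, and the estimate $P(\mu_t,\kappa)\le\|v^{(t)}\|$ — are straightforward, and it is worth noting that no appeal to the minimax theorem is needed anywhere: the inequality $\eps^*\ge\min_{v\in V}v\bigcdot u$ comes directly from the Hyperplane player's pure strategies.
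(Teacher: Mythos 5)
Your proposal is correct and follows essentially the same route as the paper: run the $\beta$-perceptron with $\beta=1$ on positively labelled $V$, let $\mu_t$ be the uniform distribution over the multiset of the first $t$ update points so that $P(\mu_t,\kappa)\leq\|v^{(t)}\|\leq\sqrt{3/t}$, and at termination use $v^{(T)}/\|v^{(T)}\|$ as a Hyperplane strategy to certify $\eps^*\geq 1/\sqrt{3T}$, hence $T\geq\frac{1}{3(\eps^*)^2}$. Your write-up merely adds a few details the paper leaves implicit (e.g., $v^{(T)}\neq 0$ and the explicit treatment of the non-halting case).
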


\begin{proof}

Let $v^{(t)} = \frac{1}{t} w^{(t)}/t$ be as in the proof of Lemma~\ref{lem:Farkas} above,
when we replace $K$ by $V$ and $L$ by $\emptyset$.
We can interpret $v^{(t)}$ as a mixed-strategy $\mu_t$ of the Point player
(the uniform distribution over some multi-subset of $V$
of size $t$).
Specifically, for every $\kappa$ and~$t>0$,
$$P(\mu_t,\kappa) = \E_{w \sim \kappa} v^{(t)} \bigcdot w
\leq \|v^{(t)}\| \leq \sqrt{3/t}.$$
Denote by $T$ the stopping time.
If $T=\infty$ then indeed $P(\mu_t,\kappa)$ tends to zero as $t \to \infty$.
If $T<\infty$, we have $v \bigcdot v^{(T)}\geq \frac{1}{T}$ for all $v \in V$.
We can interpret $v^{(T)}$ as a non trivial strategy for the Hyperplane player:
let $$\tilde w = \frac{v^{(T)}}{\|v^{(T)}\|}.$$
Thus, for every $\mu$,
$$P(\mu,\tilde w)\geq \frac{1}{T \|v^{(T)}\|} \geq
\frac{1}{\sqrt{3T}}.$$
In particular, $\eps^* \geq \frac{1}{\sqrt{3T}}$ and so
$$T \geq \frac{1}{3(\eps^*)^2}.$$
\end{proof}

{The last strategy in the sequence
$\mu_1,\mu_2,\ldots$
guarantees the Point player a loss of at most $3 \eps^*$}.
This sequence is naturally and efficiently generated by the perceptron algorithm
and produces a strategy for the Point player that is optimal
up to a constant factor. 
The ideas presented in Section~\ref{sec:PerVar} allow
to reduce the constant $3$ to as close to $1$ as we want,
by paying in running time (see Algorithm~\ref{alg:PerInf}).

\section{Generalization Bounds}

\label{sec:GenBound}

{Generalization is one of the key concepts
in learning theory.}
{One typically formalizes it by assuming that the input sample consists of  i.i.d.\ examples drawn} 
from an unknown distribution
$D$ on $\RR^d$ that are labelled by some unknown function 
$c: \RR^d \to \{\pm 1\}$.
{The algorithm is said to generalize if it outputs} an hypothesis 
$h: \RR^d \to \{\pm 1\}$ so that $\Pr_D[h \neq c]$ is as small as possible.

We focus on the case that $c$ is linearly separable.
A natural choice for $h$ in this case is given by hard-SVM;
namely, {the halfspace with maximum margin on the input sample.}
It is known that if $D$ is supported on points that
are $\gamma$-far from some hyperplane then
the hard-SVM choice generalizes well
(see Theorem~15.4 in \cite{SS-text}). The proof of this property of hard-SVMs uses Rademacher complexity.
 
We suggest that using the perceptron algorithm,
instead of the hard-SVM solution, yields a more general statement with a simpler proof.
The reason is that the perceptron 
can be interpreted as a sample compression scheme.

\medskip

\begin{theorem}[similar to \cite{graepe05}]
\label{thm:General}
Let $D$ be a distribution on $\RR^d$.
Let $c:\RR^d \to \{\pm 1\}$.
Let $x_1,\ldots,x_m$ be i.i.d.\ samples from $D$.
Let $S = ((x_1,c(x_1)),\ldots,(x_m,c(x_m))$. 
If  \begin{align}
\label{eqn:PP}
{\Pr_S} \left[  \marg(S)< \eps \right] < \delta/2
\end{align} 
for some $\eps,\delta > 0$, then 
$$\Pr_S \left[ \PP_{D}[\pi(S) \neq c]\leq 
50\frac{\log\left( \eps^2 m\right)+\log(2/\delta)}{\eps^2 m}
\right] \geq 1-\delta$$
where $\pi$ is the perceptron algorithm.
\end{theorem}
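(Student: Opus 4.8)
The plan is to exploit the fact that the perceptron is a sample compression scheme of bounded size, and then apply the standard generalization bound for compression schemes. First I would observe that on the event $\{\marg(S) \geq \eps\}$, Novikoff's analysis (or Theorem~\ref{R-perc} with $\beta = 0$, after the usual lifting/normalization so that $R$ is controlled) guarantees that the perceptron terminates after at most $k \leq (R/\eps)^2$ updates, and its output $\pi(S)$ is a signed sum of the at most $k$ examples on which an update occurred. Thus, conditioned on this event, $\pi(S)$ depends only on a subset of the sample of size at most $k$; the perceptron is a compression scheme of size $k$ in the sense of Littlestone--Warmuth. (One subtlety: $R$ should be normalized away; since the bound in the theorem only involves $\eps$ and $m$, I would assume without loss of generality that $\|x\|\le 1$ almost surely under $D$, or absorb $R$ into $\eps$.)

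Next I would invoke the compression-scheme generalization theorem: if a learning rule outputs a hypothesis determined by a subsample of size at most $k$ that is consistent with the full sample, then with probability at least $1-\delta'$ over a sample of size $m$, the true error of the output is at most $O\!\left(\frac{k \log(m/k) + \log(1/\delta')}{m}\right)$ (this is the standard bound, e.g.\ via a union bound over the $\binom{m}{k}$ choices of compression set together with the fact that an inconsistent hypothesis survives with probability at most $(1-\err)^{m-k}$). Applying this with $k = 1/\eps^2$ (taking $R=1$) and $\delta' = \delta/2$ yields, on the event $\{\marg(S)\ge\eps\}$, that
$$\Pr_D[\pi(S)\ne c] \le C\,\frac{(1/\eps^2)\log(\eps^2 m) + \log(2/\delta)}{m}$$
with conditional probability at least $1-\delta/2$; the constant $50$ is obtained by bookkeeping the constants in the compression bound.

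Finally I would remove the conditioning: by hypothesis~\eqref{eqn:PP}, $\Pr_S[\marg(S) < \eps] < \delta/2$, so a union bound over the bad-margin event and the bad-generalization event gives that the desired inequality holds with probability at least $1 - \delta/2 - \delta/2 = 1-\delta$. I expect the main obstacle to be purely technical rather than conceptual: stating the compression-scheme generalization bound in exactly the right form (consistent compression schemes, size $k = 1/\eps^2$) and tracking constants carefully enough to land at $50$, while also being careful about the normalization of $R$ and about the fact that the size of the compression set is itself a random quantity bounded by $1/\eps^2$ only on the good event — one must either take a union bound over all possible sizes $k \le 1/\eps^2$ (which only costs a factor that is absorbed into the $\log$) or note that a compression scheme of size \emph{at most} $k$ obeys the same bound as one of size exactly $k$ up to constants.
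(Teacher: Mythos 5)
Your proposal is correct and follows essentially the same route as the paper: view the perceptron as a sample compression scheme of size $1/\eps^2$ on the event $\{\marg(S)\geq\eps\}$, apply a compression-scheme generalization bound with confidence $\delta/2$, and union bound with the low-margin event of probability less than $\delta/2$. The only (cosmetic) differences are that the paper invokes Theorem~\ref{thm:david} directly, which already carries the constant $50$ so no constant bookkeeping is needed, and it handles low-margin samples by having the selection scheme compress to the empty set and output a dummy hypothesis rather than by conditioning on the good event.
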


The theorem can also be interpreted of as a local-to-global statement in the following sense.
Assume that we know nothing of $c$, but we get a list of $m$
samples that are linearly separable with significant margin
(this is a local condition that we can empirically verify).
Then we can deduce that $c$ is close to being linearly separable.
The perceptron's compression allows to deduce
more general local-to-global statements, like bounding
the global margin via the local/empirical margins
(this is related to~\cite{Bar}).

Condition~\eqref{eqn:PP} holds when
the expected value of one over the margin
is bounded from above (and may hold when $c$ is not linearly separable).
This assumption is weaker than the assumption in~\cite{SS-text}
on the behavior of hard-SVMs
(that the margin is always bounded from below).

For the proof of Theorem \ref{thm:General} we will need the following.


\medskip

\begin{defn}[Selection schemes] 
{A \textit{selection scheme} of size $d$ consists of a
compression map $\kappa$ and a reconstruction map $\rho$ 
such that for every input sample $S$:
\begin{itemize}
\item $\kappa$ maps $S$ to a sub-sample
of $S$ of size at most $d$.

\item $\rho$ maps $\kappa(S)$ to a hypothesis $\rho(\kappa(S)): {\cal X} \to \{\pm 1\}$;
this is the output of the learning algorithm induced by the selection scheme.
 
\end{itemize}}
\end{defn}

Following Littlestone and Warmuth,
David et al.\ showed that every selection scheme does not overfit its data~\cite{david}:
{Let $(\kappa,\rho)$ be a {selection scheme of} size $d$.
Let $S$ be a sample of $m$ independent examples from an arbitrary distribution $D$ 
that are labelled by some fixed concept $c$,
and let $K(S) = \rho\left(\kappa\left(S\right)\right)$ be the output of the selection scheme.
For a hypothesis $h$, let $L_D(h) = \Pr_D[h \neq c]$ denote the true error of $h$ and
$L_S(h) = \frac{1}{m} \sum_{i=1}^m 1_{h(x_i) = c(x_i)}$ denote the empirical error of $h$.}

\medskip

\begin{theorem}[\cite{david}]
\label{thm:david}
{
For every $\delta>0$,}
$$\Pr_{S}\left[\lvert L_D\left(K\left(S\right)\right) - L_S\left(K\left(S\right)\right) \rvert\geq
\sqrt{\eps \cdot L_S\left(K\left(S\right)\right)} + \eps \right]\leq \delta,$$
where
$$\eps = 50\frac{d\log\left(m/d\right)+\log(1/\delta)}{m}.$$
\end{theorem}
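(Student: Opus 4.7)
The strategy is to identify the perceptron with a sample-compression (selection) scheme of size $d:=\lceil 1/\eps^2\rceil$ (assuming, as is implicit, that $\max_x\|x\|\le 1$) and then invoke Theorem~\ref{thm:david} to convert the compression bound into the precise generalization bound stated, with constants $50$ and $\log(2/\delta)$ matching exactly.

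\textbf{Step 1: The perceptron is a selection scheme of size $d$.} By Novikoff's analysis recalled in Section~\ref{sec:PerVar}, if $\marg(S)\ge\eps$ then the perceptron $\pi$ halts after at most $d=\lceil 1/\eps^2\rceil$ updates, and its output $w^{(T)}=\sum_{k\le T} y_{i_k}x_{i_k}$ is a signed sum of the update-causing examples. Define $\kappa(S)$ to be the ordered list of update-causing examples if there are at most $d$ of them, and a canonical default (say the empty list) otherwise; this ensures $|\kappa(S)|\le d$ on \emph{every} input, as required by the selection-scheme formalism. Define the reconstruction $\rho(S')$ by running the perceptron's update rule in a single pass over $S'$. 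A short induction on $k$ shows $\rho(\kappa(S))=\pi(S)$ whenever $\pi$ halts within $d$ updates: replaying the perceptron on the update-causing sub-sample in the original order reproduces the same state at each step (examples that never triggered updates in $S$ left the state unchanged), so each $(x_{i_k},y_{i_k})$ triggers an update on the replay as well, yielding the identical final vector.

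\textbf{Step 2: Apply Theorem~\ref{thm:david} and a union bound.} Let $E:=\{\marg(S)\ge\eps\}$; by the hypothesis of the theorem $\Pr_S[E^{\mathrm c}]<\delta/2$. On the event $E$, two facts hold simultaneously: (i) $K(S)=\rho(\kappa(S))=\pi(S)$ by Step~1, and (ii) $L_S(K(S))=0$ because the perceptron terminated and therefore classifies $S$ correctly. Apply Theorem~\ref{thm:david} to the scheme $(\kappa,\rho)$ with compression size $d$ and confidence parameter $\delta/2$: with probability at least $1-\delta/2$ over $S$,
\[
|L_D(K(S))-L_S(K(S))|\le \sqrt{\eps'\cdot L_S(K(S))}+\eps',
\qquad \eps':=50\cdot\frac{d\log(m/d)+\log(2/\delta)}{m}.
\]
Union-bounding the two failure events (each of probability $<\delta/2$) gives that with probability at least $1-\delta$ both $E$ holds and the display above holds. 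On this intersection, the right-hand side collapses to $\eps'$ because $L_S(K(S))=0$, and $K(S)=\pi(S)$, so $L_D(\pi(S))\le\eps'$. Plugging in $d=\lceil 1/\eps^2\rceil$ gives $m/d\le \eps^2m$ and $d\le 2/\eps^2$, hence
\[
\eps' \;\le\; \frac{50\bigl(d\log(\eps^2 m)+d\log(2/\delta)\bigr)}{m} \;\le\; 50\cdot\frac{\log(\eps^2 m)+\log(2/\delta)}{\eps^2 m}
\]
after absorbing constants into the $50$, which is precisely the bound in the theorem.

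\textbf{Main obstacle.} The bulk of the work is the setup in Step~1: Theorem~\ref{thm:david} requires a selection scheme whose size is bounded \emph{on every input}, but Novikoff only bounds the update count on margin-$\eps$ samples. The truncation-and-default trick resolves this cleanly, with the ``bad'' samples absorbed into the margin event $E^{\mathrm c}$ via the union bound. A second, minor obstacle is verifying that $\rho$ is genuinely a function of the sub-sample alone (no dependence on the full $S$) and that replaying on the compressed sub-sample reproduces the original $\pi(S)$; this is the inductive claim sketched in Step~1 and is a standard feature of the perceptron. Apart from these points, the bound falls out mechanically from Theorem~\ref{thm:david} with the realizable case $L_S=0$ and $d=1/\eps^2$.
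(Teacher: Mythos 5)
There is a genuine gap here, and it is structural: you have not proved the statement you were asked to prove. The statement is Theorem~\ref{thm:david} itself --- the generic guarantee that \emph{every} selection scheme of size $d$ satisfies $\lvert L_D(K(S))-L_S(K(S))\rvert\le\sqrt{\eps\cdot L_S(K(S))}+\eps$ with $\eps=50\frac{d\log(m/d)+\log(1/\delta)}{m}$ --- whereas your argument \emph{invokes} Theorem~\ref{thm:david} as a black box and uses it to derive the perceptron generalization bound of Theorem~\ref{thm:General}. As a proof of Theorem~\ref{thm:david} this is circular; as mathematics it is a (correct, and in fact essentially verbatim) reproduction of the paper's own proof of Theorem~\ref{thm:General}: the truncation to a size-$d$ scheme with a dummy hypothesis off the margin event, the observation that $L_S(K(S))=0$ when $\marg(S)\ge\eps$, and the union bound with confidence $\delta/2$ are exactly the steps the paper takes there. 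Within this paper, Theorem~\ref{thm:david} is quoted from David et al.~\cite{david} without proof, so nothing you wrote addresses the content of the quoted statement.

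A genuine proof of Theorem~\ref{thm:david} (following Littlestone--Warmuth and \cite{david}) runs along a different line: for each possible value of the compression --- a (possibly ordered) sub-sample of at most $d$ of the $m$ examples, of which there are at most roughly $\sum_{k\le d}\binom{m}{k}\,k!\le m^d$, i.e.\ $\log$-count $O(d\log m)$, refined to $O(d\log(m/d))$ with a more careful count --- the reconstructed hypothesis $\rho$ of that sub-sample is a fixed function of those $\le d$ points and is statistically independent of the remaining $\ge m-d$ examples. One then applies a relative (multiplicative Chernoff/Bernstein-type) concentration bound to the empirical error of that fixed hypothesis on the held-out points, which is what produces the mixed $\sqrt{\eps\cdot L_S}$ plus $\eps$ form rather than an additive $\sqrt{\eps}$ term, and finally takes a union bound over all candidate sub-samples; the numerator $d\log(m/d)+\log(1/\delta)$ is precisely the log of the number of candidates plus the confidence term. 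None of these ingredients (the union bound over compressions, the independence of the reconstruction from the held-out points, the relative concentration inequality) appears in your proposal, so the statement remains unproved.
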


\begin{proof}[Proof of Theorem~\ref{thm:General}]
{Consider the following selection scheme of size $1/\eps^2$ 
that agrees with the perceptron on samples with margin at least $\eps$:
If the input sample $S$ has $\marg(S) \geq \eps$, apply the perceptron 
(which gives a compression of size $1/\eps^2$).
Else, 
compress it to the emptyset and reconstruct it to some dummy hypothesis.
The theorem now follows by applying Theorem~\ref{thm:david}
on this selection scheme and by the assumption that
that $\marg(S) \geq \eps$ for $1-\delta /2$ of the space
(note that $L_S(K(S))=0$ when $\marg(S)\geq\eps$).}
\end{proof}


\section{Robust Concepts}
\label{sec:RobCon}

Here we follow the theme of robust concepts presented by Arriaga and Vempala~\cite{Arr}.
Let ${\cal X} \subset \RR^d$ be of size $n$
so that $\max_{x \in {\cal X}} \|x \|=1$.
Think of ${\cal X}$ as representing a collection of high resolution images.
As in many learning scenarios, some assumptions on the learning problem
should be made in order to make it accessible.
A typical assumption is that the unknown function to be learnt
belongs to some specific class of functions.
Here we focus on the class of all $\eps$-separated partitions of ${\cal X}$;
these are functions $Y : {\cal X} \to \{\pm 1\}$ that are linearly separable
with margin at least $\eps$.
Such partitions are called robust concepts in~\cite{Arr}
and correspond to ``easy'' classification problems.

Arriaga and Vempala
demonstrated the difference between robust concepts
and non-robust concept with the following analogy;
it is much easier to distinguish between ``Elephant'' and ``Dog''
than between ``African Elephant'' and ``Indian Elephant.''
They proved that random projections
can help to perform efficient dimension reduction
for $\eps$-separated learning problems
(and more general examples).
{They} also described ``neuronal'' devices for performing it,
and discussed their advantages.
Similar dimension reductions were used in several other works
in learning e.g.~\cite{dim1,dim2,dim3,dim4,dim5}.

We observe that the perceptron's compression allows
to deduce a {\em simultaneous} dimension reduction.
Namely, the dimension reduction works simultaneously for the entire
class of robust concepts.
{This follows from results in Ben-David et al.~\cite{bendavid04},
who studied limitations of embedding learning problems in
linearly separated classes.}

We now explain this in more detail.
The first step in the proof is the following theorem.

\medskip

\begin{theorem}[\cite{bendavid04}]
\label{num} 
The number of $\eps$-separated partitions of 
${\cal X}$ is at most $(2(n+1))^{1/\eps ^2}$.
\end{theorem}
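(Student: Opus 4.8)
The plan is to use the perceptron's compression property exactly as it was used to prove the sparse separation lemma. The key observation is that every $\eps$-separated partition $Y:\cX\to\{\pm1\}$ is realized by the perceptron algorithm, and the perceptron's output is determined by a short list of the input points. So I would first run the perceptron (Algorithm~\ref{alg:Per}) on the labelled sample $(\cX,Y)$; since $\marg(Y)\geq\eps$ and $\max_{x\in\cX}\|x\|=1$, Novikoff's analysis (the $(R/\eps^*)^2$ bound recalled in Section~\ref{sec:PerVar}) guarantees that the algorithm halts after at most $1/\eps^2$ update steps. At termination it returns a vector $w^{(T)}$ which is a sum of $T\leq 1/\eps^2$ vectors of the form $Y(x)\,x$ with $x\in\cX$, and $\sign(w^{(T)}\bigcdot x)=Y(x)$ for every $x\in\cX$. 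Hence $Y$ is completely recovered from the multiset of (point, label) pairs used in those updates.

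Next I would count how many distinct partitions can arise this way. Each update is specified by choosing a point $x_i\in\cX$ together with its label $Y(x_i)\in\{\pm1\}$, i.e.\ one of at most $2n$ possibilities (in fact one can be slightly more careful and allow a ``no-op'' to pad all sequences to exactly length $\lfloor 1/\eps^2\rfloor$, which is where the ``$n+1$'' rather than ``$n$'' comes from). A sequence of at most $1/\eps^2$ such choices determines $w^{(T)}$ and therefore determines $Y$ on all of $\cX$. The number of such sequences of length $1/\eps^2$ over an alphabet of size $2(n+1)$ is $(2(n+1))^{1/\eps^2}$, and since the map from sequences to partitions is onto the set of $\eps$-separated partitions, this is an upper bound on their number. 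I should make sure the bookkeeping in the padding step is consistent: allow an extra ``dummy'' symbol so that every run of the perceptron is encoded by a string of length exactly $\lceil 1/\eps^2\rceil$ (or just bound the number of strings of length \emph{at most} $1/\eps^2$ by $(2(n+1))^{1/\eps^2}$ directly, using $1+2n+\dots\le (2n+1)^{1/\eps^2}\le(2(n+1))^{1/\eps^2}$), and to confirm that the reconstruction map depends only on the encoded updates and not on the order in which the while-loop happened to pick violated constraints.

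The main obstacle is not conceptual but a matter of getting the constants exactly right: one must verify that the perceptron really does terminate within $\lfloor 1/\eps^2\rfloor$ updates for a partition of margin $\geq\eps$ with points of norm $\leq 1$ (this is just Novikoff, but one should be careful that $\marg(Y)\geq\eps$ and $R\le 1$ give $T\le (R/\eps)^2\le 1/\eps^2$ with the right rounding), and that the size-of-alphabet count yields precisely $2(n+1)$ rather than $2n$ or $2n+1$. A clean way to present it is via the language of selection schemes introduced just before Theorem~\ref{thm:david}: the perceptron is a selection scheme of size $1/\eps^2$ on the class of $\eps$-separated partitions, and any class realized by a selection scheme of size $d$ over a domain of size $n$ has at most $\sum_{k\le d}\binom{n}{k}2^k\le (2(n+1))^{d}$ members; instantiating $d=1/\eps^2$ finishes the proof.
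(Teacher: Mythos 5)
Your proposal is correct and follows essentially the same route as the paper: invoke Novikoff's bound (with $R=1$) to compress each $\eps$-separated partition to a multiset of at most $1/\eps^2$ signed points whose sum reconstructs the labeling via $\sign(w^{(T)}\bigcdot x)$, then count the possible signed multisets, which is exactly the paper's $(n+1)^{1/\eps^2}\cdot 2^{1/\eps^2}$ bound. Your extra care about padding, rounding, and order-independence of the sum only makes explicit what the paper leaves implicit.
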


\vspace{-0.3cm}

\begin{proof}
Given an $\eps$-partition of the set ${\cal X}$, the perceptron algorithm finds a separating hyperplane after making at most $1/ \eps ^2$ updates. It follows that every $\eps$-partition can be represented by a multiset of ${\cal X}$ together with the corresponding signs. 
The total number of options is at most $(n+1)^{1/\eps ^2}  \bigcdot 2^{1/\eps ^2}$.
\end{proof}

\vspace{-0.3cm}

The theorem is sharp in the following sense.

\medskip

\begin{example}
Let $e_1,\ldots,e_n \in \RR^n$ be the $n$ standard unit vectors.
Every subset of the form $(e_i)_{i \in I}$ for $I \subset [n]$ of size
$k$ is $\Omega(1/\sqrt{k})$-separated, and there are ${n \choose k}$ such subsets.
\end{example}

The example also allows to lower bound the number of
updates of any perceptron-like algorithm.
If there is an algorithm that given $Y : {\cal X} \to \{\pm 1\}$
of margin $\eps$ is able to find $w$ so that
$Y(x) = \sign(w \bigcdot x)$ for $x \in {\cal X}$
that can be described by at most $K$ of the points in ${\cal X}$
then $K$ should be at least $\Omega(1/\eps^2)$.

The upper bound in the theorem allows to perform dimension reduction
that simultaneously works well on the entire concept class.
Let $A$ be a $k \times d$ matrix
with i.i.d.\ entries that are normally distributed ($N(0,1)$)\footnote{Other
distributions will work just as well.}
with $k \geq C \log (n/ \delta) / \eps^4$ where $C>0$
is an absolute constant.
Given $A$, we can consider
$$A {\cal X} = \{A x : x \in {\cal X}\} \subset \RR^k$$
in a potentially smaller dimension space.
The map $x \mapsto Ax$ is almost surely one-to-one on ${\cal X}$.
So, every subset of ${\cal X}$ corresponds to a subset of $A{\cal X}$
and vice versa.
The following theorem 
shows that it preserves all
well-separated partitions.

\medskip

\begin{theorem}[implicit in \cite{bendavid04}]
\label{thm:DimRed}
With probability of at least $1-\delta$ over the choice of $A$,
all $\eps$-partitions of ${\cal X}$ are $\eps /2$-partitions 
of $A{\cal X}$ and  
all $\eps/2$-partitions of $A{\cal X}$ are $\eps /4$-partitions 
of ${\cal X}$.
\end{theorem}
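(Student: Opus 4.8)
The plan is to combine the Johnson–Lindenstrauss lemma with a union bound over the finite collection of $\eps$-separated partitions supplied by Theorem~\ref{num}. The key point is that by Theorem~\ref{num} there are at most $(2(n+1))^{1/\eps^2}$ many $\eps$-partitions of $\cX$, and each such partition is witnessed by a unit normal vector $w$ together with the points realizing the margin; moreover the same theorem tells us that $w$ can be taken to be a (normalized) sum of at most $1/\eps^2$ points of $\pm\cX$. So to control all $\eps$-partitions simultaneously it suffices to control how $A$ acts on a bounded number of fixed vectors per partition.

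First I would set up the two directions separately. For the forward direction, fix an $\eps$-partition $Y$ with unit witness $w$ (output by the perceptron, so $w = \sum_{j} y_j x_{i_j}$ with at most $1/\eps^2$ summands). For each point $x \in \cX$ we have $Y(x)\, w\bigcdot x \geq \eps$, and we want $Y(x)\, (Aw)\bigcdot(Ax) / \|Aw\| \geq \eps/2$ after rescaling $Aw$ to a unit vector. The standard JL guarantee, applied to the set of pairwise differences and sums of the $O(n)$ vectors in $\cX \cup (-\cX)$ together with the $O(n^{1/\eps^2}\cdot\ldots)$ — actually better: apply JL to preserve inner products among all vectors in $\cX$ and all the candidate normal vectors — gives that with the stated $k = C\log(n/\delta)/\eps^4$, all relevant inner products $w\bigcdot x$ and norms $\|w\|$, $\|x\|$ are preserved up to additive error $O(\eps^2)$ (using $\|w\|,\|x\|\le 1$, since $R=1$ and $\|w\|=1$). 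Then $(Aw)\bigcdot(Ax) \geq \eps - O(\eps^2)$ while $\|Aw\|\leq 1+O(\eps^2)$, so the normalized margin is at least $\eps - O(\eps^2) \geq \eps/2$ for $\eps$ small (and one absorbs the constant into $C$). The number of $(w,x)$ pairs to union-bound over is $n$ times the number of $\eps$-partitions, i.e.\ $n\cdot(2(n+1))^{1/\eps^2} \le \exp(O(\log n/\eps^2))$, which is why the exponent on $1/\eps$ in $k$ is $4$ and not $2$: we pay $\log(\#\text{partitions}) = O(\log n / \eps^2)$ in the JL dimension, and JL itself costs another $1/\eps^2$ for accuracy $\eps^2$... wait, accuracy $O(\eps^2)$ costs $1/\eps^4$, and the log-cardinality is $O(\log n/\eps^2)$, so $k = O(\log(n/\delta)/\eps^4)$ suffices. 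For the reverse direction, one does the same argument with $A{\cal X}$ in place of $\cX$: every $\eps/2$-partition of $A{\cal X}$ is witnessed by a sum of at most $4/\eps^2$ points of $\pm A{\cal X}$, i.e.\ the image under $A$ of a corresponding combination of points of $\pm\cX$; applying the (two-sided) JL distortion bound in the other direction shows its pullback to $\cX$ retains margin at least $\eps/4$. The cardinality bound here comes from Theorem~\ref{num} applied with parameter $\eps/2$, giving $(2(n+1))^{4/\eps^2}$, still $\exp(O(\log n/\eps^2))$.

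The steps, in order: (1) invoke Theorem~\ref{num} to bound the number of $\eps$-partitions of $\cX$ and of $\eps/2$-partitions of $A\cX$, and recall that each is witnessed by a perceptron-type sparse normal vector; (2) state the JL concentration estimate: for fixed unit $u,v$ with $\|u\|,\|v\|\le 1$, $\Pr[\,|\tfrac1k\langle Au,Av\rangle - \langle u,v\rangle| > \eta\,] \le 2\exp(-c k \eta^2)$, and similarly for norms; (3) take $\eta = c'\eps^2$ and union-bound over all (normal vector, data point) pairs arising from all $\eps$-partitions — there are at most $\mathrm{poly}(n)\cdot\exp(O(\log n/\eps^2))$ of them — so $k \ge C\log(n/\delta)/\eps^4$ makes the total failure probability at most $\delta$; (4) on the good event, push the margin inequalities through the rescaling $Aw \mapsto Aw/\|Aw\|$ for the forward direction, and through the pullback for the reverse direction, tracking that $\eps - O(\eps^2) \ge \eps/2$ and $\eps/2 - O(\eps^2) \ge \eps/4$.

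The main obstacle I anticipate is bookkeeping the normalization cleanly: the perceptron outputs an \emph{unnormalized} $w$, and what must be preserved is the \emph{normalized} margin $Y(x) w\bigcdot x / \|w\|$, so I need JL to preserve both the numerator $w\bigcdot x$ and the denominator $\|w\|$ simultaneously, with \emph{additive} error at the scale $\eps^2$ (not multiplicative at scale $\eps$), which is exactly what forces the $\eps^4$ in the dimension and is the subtle point distinguishing this "simultaneous" statement from a naive per-partition application of JL. A secondary subtlety is making sure the reverse direction's witnesses genuinely lie in the image of $\cX$ under $A$ (so that the pullback makes sense) — this uses that $x\mapsto Ax$ is injective on $\cX$ almost surely, as already noted in the text, together with the sparse-representation clause of Theorem~\ref{num}.
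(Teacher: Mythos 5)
Your overall route --- count the relevant partitions via Theorem~\ref{num} and union-bound a Johnson--Lindenstrauss guarantee over their witnesses --- is the one the paper indicates. But two steps have genuine gaps. First, you repeatedly attribute a margin guarantee to the vanilla perceptron's output: Theorem~\ref{num} only says that each $\eps$-partition is \emph{separated} by a signed sum of at most $1/\eps^2$ points of $\cX$; that sparse vector need not have margin anywhere near $\eps$ (the lack of such a guarantee is precisely the starting point of Section~\ref{sec:PerVar}). In the forward direction this is harmless, because sparsity is not needed there: take as witness the true max-margin unit vector of each $\eps$-partition, add these at most $(2(n+1))^{1/\eps^2}$ unit vectors to the JL point set, and then additive accuracy $O(\eps)$ on inner products and norms already gives projected margin at least $\eps/2$; since the logarithm of the number of partitions is $O(\log n/\eps^2)$, this yields $k = O(\log(n/\delta)/\eps^4)$. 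This also exposes the inconsistency in your bookkeeping: additive accuracy $\eps^2$ together with a point set of size $\exp(O(\log n/\eps^2))$ would force $k = \Omega(\log n/\eps^6)$, not $\eps^4$; you cannot have both, and the $\eps^2$ accuracy is in fact not needed for this direction.

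Second, the reverse direction does not go through as written. The sparse witness you extract for an $\eps/2$-partition of $A\cX$ (a sum of at most $4/\eps^2$ projected points) only \emph{separates} $A\cX$, so the assertion that "its pullback retains margin at least $\eps/4$" is unsupported; moreover these witnesses depend on $A$, so they cannot be fixed in advance and fed into a JL union bound --- your proposed count $(2(n+1))^{4/\eps^2}$ of partitions of $A\cX$ is circular in this sense. The natural repair is different: condition on the event that all pairwise inner products among the $n$ points of $\cX$ are preserved to additive accuracy $O(\eps^2)$ (only $O(n^2)$ pairs, again $k = O(\log(n/\delta)/\eps^4)$), and on $A\cX$ run a sparse variant that \emph{does} guarantee margin, e.g.\ the $\beta$-perceptron of Algorithm~\ref{alg:AggPer}, whose output is $Au$ with $u$ a signed sum of $O(1/\eps^2)$ points of $\cX$, together with the bounds $y_x (Au)\bigcdot (Ax) \geq \beta$ and a bound on $\|Au\|$. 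By linearity these transfer to $u$ deterministically on the good event, with the inner products moving by at most $O(1/\eps^2)\cdot O(\eps^2)$ and $\|u\|^2$ by at most $O(1/\eps^4)\cdot O(\eps^2)$ --- this is where accuracy $\eps^2$, and hence the $\eps^4$ in $k$, is genuinely used. Even so, the constants require care (the $\beta$-perceptron's margin factor tends to $1/2$ only as $\beta\to\infty$, and the pullback loses a little more), so reaching the stated $\eps/4$ needs explicit constant-chasing or a small adjustment of the constants; as proposed, the reverse direction is the step that fails.
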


The proof of the above theorem 
is a simple application
of Theorem~\ref{num} together with the Johnson-Lindenstrauss lemma.

\medskip

\begin{lem}[\cite{JL}]
Let $x_1,...,x_N \in \RR ^d$ with $\|x_i\|\leq 1$ for all $i \in [N]$. Then, for every $\eps >0$
and $0 < \delta<1/2$,
$$\PP \Bigl[ \exists i,j \in [N] ~\left|  (A x_i\bigcdot Ax_j) - (x_i\bigcdot x_j) \right| > \eps\Bigr]<\delta,$$
where $k=O(\log (N/ \delta) / \eps ^2)$
and $A$ is a $k \times d$ matrix with i.i.d.\ entries that are $N(0,1)$.
\end{lem}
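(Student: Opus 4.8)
The plan is to prove this by the classical two-step argument for Johnson--Lindenstrauss: first control $\|Av\|^2$ for a single fixed vector $v$, and then upgrade to all pairwise inner products via the polarization identity together with a union bound. Throughout I read the Gaussian matrix with the normalization that makes $\E[\|Av\|^2]=\|v\|^2$ (equivalently, each entry is $N(0,1/k)$, or one replaces $A$ by $A/\sqrt{k}$); the statement should be read with this scaling. We may also assume $\eps\in(0,1)$, which is the regime of interest.

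\textbf{Step 1 (single-vector concentration).} Fix a unit vector $u\in\RR^d$ and let $a_1,\dots,a_k\in\RR^d$ be the rows of $A$. Each $a_\ell$ is a standard Gaussian vector, so by rotational invariance the scalars $a_\ell\bigcdot u$ are i.i.d.\ $N(0,1/k)$, and hence $\|Au\|^2=\sum_{\ell=1}^k (a_\ell\bigcdot u)^2$ has the law of $\tfrac1k\chi^2_k$. Standard $\chi^2$ tail estimates (the Laurent--Massart inequalities, or a direct Chernoff bound on the moment generating function of $\chi^2_k$) give, for every $0<\eta<1$,
$$\PP\bigl[\,\bigl|\|Au\|^2-1\bigr|>\eta\,\bigr]\ \le\ 2\exp(-c\,k\,\eta^2)$$
for an absolute constant $c>0$. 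Applying this with $u=v/\|v\|$ shows that for an arbitrary nonzero $v$ one has $\bigl|\|Av\|^2-\|v\|^2\bigr|\le\eta\|v\|^2$ with the same probability (the case $v=0$ being trivial).

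\textbf{Step 2 (polarization and union bound).} For all $i,j$,
$$x_i\bigcdot x_j=\tfrac14\bigl(\|x_i+x_j\|^2-\|x_i-x_j\|^2\bigr),$$
and the identity also holds after applying $A$. Since $\|x_i\|\le 1$ we have $\|x_i\pm x_j\|\le 2$, so if $A$ preserves the squared norms of all vectors in the set $\{x_i\}\cup\{x_i+x_j\}\cup\{x_i-x_j\}$ up to relative error $\eta$, then for every $i,j$
$$\bigl|Ax_i\bigcdot Ax_j-x_i\bigcdot x_j\bigr|\ \le\ \tfrac14\bigl(4\eta+4\eta\bigr)\ =\ 2\eta$$
(for $i=j$ this is immediate from norm preservation of $x_i$). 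Choose $\eta=\eps/2$. The set above contains at most $2\binom{N}{2}+N\le N^2$ vectors; by Step 1 each fails the relative-error bound with probability at most $2\exp(-ck\eps^2/4)$, so by a union bound the probability that some vector fails is at most $2N^2\exp(-ck\eps^2/4)$. This is below $\delta$ once $k\ge\tfrac{4}{c\eps^2}\ln(2N^2/\delta)=O(\log(N/\delta)/\eps^2)$, which is the stated bound on $k$. On the complementary event the displayed estimate gives $\bigl|Ax_i\bigcdot Ax_j-x_i\bigcdot x_j\bigr|\le\eps$ simultaneously for all $i,j$, which is the claim.

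\textbf{Main obstacle.} The only genuinely analytic ingredient is the $\chi^2$ concentration bound in Step 1; Step 2 is just bookkeeping with polarization and a union bound. I would either invoke the Laurent--Massart bounds as a black box or include the short computation bounding $\E[e^{s(\chi^2_k-k)}]$ and optimizing over $s$, being a little careful that the two-sided estimate requires $\eta<1$ (which is why we restrict to $\eps\in(0,1)$; the easy complementary regime follows by applying the bound at $\eps=1$).
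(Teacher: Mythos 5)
The paper does not prove this lemma at all: it is stated as a citation to Johnson--Lindenstrauss \cite{JL} and used as a black box in Section~\ref{sec:RobCon}, so there is no in-paper argument to compare against. Your proof is the standard modern Gaussian proof (single-vector norm concentration via $\chi^2$ tails, then polarization plus a union bound over the at most $N^2$ vectors $x_i$, $x_i\pm x_j$), and it is correct; the original \cite{JL} argument instead used projection onto a random subspace and measure concentration on the sphere, but for this statement your route is the simpler and entirely adequate one. Two remarks. First, you are right that the lemma as printed is missing a normalization: with i.i.d.\ $N(0,1)$ entries one has $\E[Ax_i\bigcdot Ax_j]=k\,(x_i\bigcdot x_j)$, so the statement must be read with $A$ replaced by $A/\sqrt{k}$ (equivalently $N(0,1/k)$ entries), exactly as you do; flagging this explicitly is a genuine improvement on the paper's phrasing. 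Second, your restriction to $\eps\in(0,1)$ is the right reading of ``for every $\eps>0$'': the two-sided relative-error bound $2\exp(-ck\eta^2)$ needs $\eta<1$, and for $\eps\geq 1$ your reduction to the $\eps=1$ case yields $k=O(\log(N/\delta))$ rather than the literal $O(\log(N/\delta)/\eps^2)$ --- which is fine, since the lemma in its stated form is only meaningful (and only applied in the paper) in the small-$\eps$ regime. With the $\chi^2$ tail bound either quoted (Laurent--Massart) or derived by the short moment-generating-function computation you indicate, the argument is complete.
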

%

%

\appendix

\newpage

\medskip




\section{The derivative of the margin}

\label{app:calc}

Here we prove that the derivative of \eqref{eqn:deriv} is at most zero.
The numerator of the derivative is $0.5$ times
\begin{align*}
& (\alpha (t+1)^{\alpha-1}  - \alpha t^{\alpha-1})t^{\alpha/2}
- \frac{\alpha}{2} t^{(\alpha-2)/2}((t+1)^\alpha - t^\alpha-1)) \\
& = \frac{\alpha}{2} t^{(\alpha-2)/2} (2t (t+1)^{\alpha-1}  - 2 t^{\alpha})
+ \frac{\alpha}{2} t^{(\alpha-2)/2}(-(t+1)^\alpha + t^\alpha+1)) \\
& = \frac{\alpha t^{(\alpha-2)/2}}{2} \left( (t+1)^{\alpha-1} (t-1) - t^{\alpha}+1 \right) .
\end{align*}
At $t=1$, we get the value $0$,
so it suffices to prove that $(t+1)^{\alpha-1} (t-1) - t^{\alpha}+1$
is a non increasing function for $t\geq 1$.
Indeed, the derivative of the term inside the parenthesis is
\begin{align*}
& (\alpha-1)(t+1)^{\alpha-2} (t-1) + (t+1)^{\alpha-1}  - \alpha t^{\alpha-1} \\
& = (\alpha-1) \left(  \frac{t-1}{(t+1)^{2-\alpha}} 
-t^{\alpha-1} \right) + (t+1)^{\alpha-1}  - t^{\alpha-1} \\
& \leq (\alpha-1) \left(  \frac{t-1}{(t+1)^{2-\alpha}} 
-t^{\alpha-1} \right) + (\alpha-1) t^{\alpha-2} \tag{$\alpha  < 2$} \\
& \leq (\alpha-1) \left(  \frac{t-1}{t^{2-\alpha}} 
-t^{\alpha-1} + \frac{1}{t^{2-\alpha}} \right) =0 .
\end{align*}



\begin{thebibliography}{1}
\small


\bibitem{andoni}
A. Andoni, R. Panigrahy, G. Valiant and L. Zhang.
\newblock Learning Polynomials with Neural Networks.
\newblock {\em PMLR} 32(2), pages 1908--1916, 2014.



\bibitem{ada} J. K. Anlauf and M. Biehl.
\newblock The AdaTron: An Adaptive Perceptron Algorithm. 
\newblock {\em EPL}, 1989.



	
\bibitem{Arr}
R.I. Arriaga and S. Vempala.
\newblock An algorithmic theory of learning: Robust concepts and random projection. 
\newblock {\em Machine Learning}, 63(2), pages 161--182, 2006.
	
\bibitem{dim3}
N. Balcan, A. Blum and S.Vempala.
\newblock On Kernels, Margins and Low-dimensional mappings. 
\newblock In {\em ALT} 2004.


\bibitem{bendavid04}
S. Ben{-}David, N. Eiron and H. U. Simon.
\newblock Limitations of Learning Via Embeddings in Euclidean Half Spaces.
\newblock In {\em JMLR} 2002.


\bibitem{dim5}
A. Blum and R. Kannan. 
\newblock Learning an intersection of k halfspaces over a uniform distribution. 
\newblock In {\em FOCS}, 1993.


\bibitem{opt_marg}
B. E. Boser, I. M. Guyon, and V. N. Vapnik.
\newblock A training algorithm for optimal margin classifiers.
\newblock In {\em COLT }, pages 144-152, 1992.

\bibitem{Cesa}
Nicol`o Cesa-Bianchi, Alex Conconi, and Claudio Gentile. 
\newblock On the generalization
ability of on-line learning algorithms. 
\newblock {\em IEEE Transactions on
Information Theory}, 50(9), pages 2050--2057, 2004.

\bibitem{links} R. Collobert and S. Bengio.
\newblock Links between perceptrons, MLPs and SVMs. 
\newblock {\em IDIAP}, 2004.

\bibitem{crammer}
K. Crammer, O. Dekel, J. Keshet, S. Shalev-Shwartz, and Y. Singer. 
\newblock Online passive-aggressive algorithms.
\newblock {\em Journal of Machine Learning Research} 7, pages 551--585, 2006.

\bibitem{david}
O. David, S. Moran and A. Yehudayoff.
\newblock Supervised learning through the lens of compression. 
\newblock In {\em NIPS}, pages 2784-2792, 2016.

\bibitem{Farkas}
G. Farkas.
\newblock Uber die Theorie der Einfachen Ungleichungen.
\newblock {\em Journal fur die Reine und Angewandte Mathematik}, 124 (124), pages 1--27, 1902.

\bibitem{freund}
Y. Freund.
\newblock
Boosting a weak learning algorithm by majority.
\newblock
Information and computation 121 (2), pages
256--285, 1995.


\bibitem{large_marg}
Y. Freund and R. E. Schapire.
\newblock Large Margin Classification Using the Perceptron Algorithm.
\newblock {\em Machine Learning}, pages 277-296, 1999.

\bibitem{dim1}
A. Garg, S. Har-Peled and D. Roth.
\newblock 
On generalization bounds, projection profile, and margin distribution. 
\newblock In {\em ICML}, pages 171--178, 2002.

\bibitem{dim2}
A. Garg and D. Roth.
\newblock Margin Distribution and Learning.
\newblock In {\em ICML}, pages 210--217, 2003.


\bibitem{approx_marg}
C. Gentile.
\newblock A New Approximate Maximal Margin Classification
Algorithm.
\newblock {\em Journal of Machine Learning Research}, pages 213-242, 2001.


\bibitem{graepe05}
T. Graepel, R. Herbrich and J. Shawe{-}Taylor.
\newblock PAC-Bayesian Compression Bounds on the Prediction Error of Learning Algorithms for Classification.
\newblock {\em Machine Learning}, pages 55-76, 2005.


\bibitem{JL}
W. B. Johnson and J. Lindenstrauss.
\newblock Extensions of Lipschitz mappings into a Hilbert space. 
\newblock {\em Conference in modern analysis and probability}, 1982.
 


\bibitem{noise}
R. Khardon and G. Wachman. 
\newblock Noise Tolerant Variants of the Perceptron Algorithm.
\newblock
{\em Journal of Machine Learning Research}, pages 227-248 , 2007.


\bibitem{dim4}
A. Klivans and R. Servedio.
\newblock Learning intersections of halfspaces with a margin.
\newblock In {\em Workshop on Computational Learning Theory}, 2004.

\bibitem{max_marg}
M. Korzen and K. Klesk.
\newblock  Maximal Margin Estimation with Perceptron-Like Algorithm.
\newblock In {\em ICAISC}, 2008.

\bibitem{stab} 	W. Krauth and M. Mézard.
\newblock Learning algorithms with optimal stablilty in neural networks. 
\newblock {\em J. Phys. A: Math. Gen.}, 1987.


\bibitem{LeCun}
Y. LeCun and C. Cortes. 
\newblock The MNIST database of handwritten digits.
\newblock 1998.

\bibitem{comp->learn}
N. Littlestone and M. Warmuth. 
\newblock Relating data compression and learnability. 
\newblock {\em Unpublished}, 1986.

\bibitem{Mat}
J. Matousek.  
\newblock On variants of the Johnson–Lindenstrauss lemma. 
\newblock Random Structures \& Algorithms, 33(2), pages 142--156, 2008.

\bibitem{Mohri}
M. Mohri and
A. Rostamizadeh.
\newblock
Perceptron Mistake Bounds.
\newblock
{\em arXiv:1305.0208}.


\bibitem{moran}
S. Moran and A. Yehudayoff.
\newblock
Sample compression schemes for {VC} classes.
\newblock
JACM 63 (3), pages 1--21, 2016.

\bibitem{Neumann}
J. von Neumann.
\newblock Zur Theorie der Gesellschaftsspiele. 
\newblock Math. Ann. 100, pages 295--320, 1928.

\bibitem{Nov}
Albert B.J. Novikoff. 
\newblock On convergence proofs on perceptrons. 
\newblock In {\em Proceedings
of the Symposium on the Mathematical Theory of Automata},
volume 12, pages 615–622, 1962.


\bibitem{perceptron}
F. Rosenblatt.
\newblock The perceptron: A probabilistic model for information storage and organization in the brain. 
\newblock {\em Psychological Review}, 65(6), pages 386--408, 1958.


\bibitem{Bar}
R.E. Schapire, Y. Freund, P. Bartlett and W. S. Lee. 
\newblock Boosting the margin: A new explanation for the effectiveness of voting methods. 
\newblock {\em The annals of statistics}, 26(5), pages 1651--1686, 1998.


\bibitem{SS-text}
S. Shalev{-}Shwartz and S. Ben{-}David.
\newblock Understanding machine learning: From theory to algorithms.
\newblock {\em Cambridge University Press}, 2014.

\bibitem{Pegasos}
S. Shalev-Shwartz, Y. Singer, N. Srebro and A. Cotter. 
\newblock Pegasos: Primal estimated sub-gradient solver for {SVM}.
\newblock Mathematical programming 127, no. 1, pages 3-30, 2011.
	

\bibitem{Simard}
P. Y. Simard, D.  Steinkraus and J. C. Platt.
\newblock Best practices for convolutional neural networks applied to visual document analysis.
\newblock In {\em ICDAR} 3, pages 958--962, 2003.


\bibitem{grad}
D. Soudry, E. Hoffer and N. Srebro.
\newblock The Implicit Bias of Gradient Descent on Separable Data.
\newblock {\em arXiv:1710.10345}, 2017.





\bibitem{un} A. Wendemuth.
\newblock Learning the unlearnable. 
\newblock {\em J. Phys. A: Math. Gen.}, 1995.


\end{thebibliography}
\end{document}